
\documentclass{article}

\usepackage{microtype}
\usepackage{graphicx}
\usepackage{subcaption}
\usepackage{booktabs} 

\usepackage{hyperref}


\usepackage[accepted]{icml2025}

\usepackage{amsmath}
\usepackage{amssymb}
\usepackage{mathtools}
\usepackage{amsthm}
\usepackage{multicol}
\usepackage[capitalize,noabbrev]{cleveref}
\usepackage{xspace}
\usepackage{multirow}
\usepackage{makecell}
\hypersetup{urlcolor=red}

\theoremstyle{plain}
\newtheorem{theorem}{Theorem}[section]
\newtheorem{proposition}[theorem]{Proposition}
\newtheorem{lemma}[theorem]{Lemma}

\theoremstyle{definition}
\newtheorem{definition}[theorem]{Definition}
\newtheorem{assumption}[theorem]{Assumption}
\theoremstyle{remark}

\usepackage[textsize=tiny]{todonotes}



\usepackage{amsmath,amsfonts,bm}









\def\eqref#1{Eq.~(\ref{#1})}









\def\1{\bm{1}}










\DeclareMathAlphabet{\mathsfit}{\encodingdefault}{\sfdefault}{m}{sl}
\SetMathAlphabet{\mathsfit}{bold}{\encodingdefault}{\sfdefault}{bx}{n}













\DeclareMathOperator*{\argmin}{arg\,min}

\newcommand{\rmd}{\mathrm{d}}

\newcommand{\KL}[2]{D_{\rm KL}\left[#1\|#2\right]}

\newcommand{\alphabar}{\bar{\alpha}}

\newcommand\scalemath[2]{\scalebox{#1}{\mbox{\ensuremath{\displaystyle #2}}}}

\newcommand{\algbb}{\texttt{BDPO}\xspace}

\icmltitlerunning{Behavior-Regularized Diffusion Policy Optimization for Offline Reinforcement Learning}

\begin{document}

\twocolumn[
\icmltitle{Behavior-Regularized Diffusion Policy Optimization\\ for Offline Reinforcement Learning}




\begin{icmlauthorlist}
\icmlauthor{Chen-Xiao Gao}{nju}
\icmlauthor{Chenyang Wu}{nju}
\icmlauthor{Mingjun Cao}{nju}
\icmlauthor{Chenjun Xiao}{cuhksz}
\icmlauthor{Yang Yu}{nju}
\icmlauthor{Zongzhang Zhang}{nju}
\end{icmlauthorlist}

\icmlaffiliation{nju}{National Key Laboratory for Novel Software Technology, Nanjing University, China \& School of Artificial
Intelligence, Nanjing University, China}

\icmlaffiliation{cuhksz}{The Chinese University of Hong Kong, Shenzhen, China}

\icmlcorrespondingauthor{Zongzhang Zhang}{zzzhang@nju.edu.cn}


\vskip 0.3in
]



\printAffiliationsAndNotice{}  

\begin{abstract}
Behavior regularization, which constrains the policy to stay close to some behavior policy, is widely used in offline reinforcement learning (RL) to manage the risk of hazardous exploitation of unseen actions. Nevertheless, existing literature on behavior-regularized RL primarily focuses on explicit policy parameterizations, such as Gaussian policies. Consequently, it remains unclear how to extend this framework to more advanced policy parameterizations, such as diffusion models. In this paper, we introduce \algbb, a principled behavior-regularized RL framework tailored for diffusion-based policies, thereby combining the expressive power of diffusion policies and the robustness provided by regularization. The key ingredient of our method is to calculate the Kullback-Leibler (KL) regularization analytically as the accumulated discrepancies in reverse-time transition kernels along the diffusion trajectory. By integrating the regularization, we develop an efficient two-time-scale actor-critic RL algorithm that produces the optimal policy while respecting the behavior constraint. Comprehensive evaluations conducted on synthetic 2D tasks and continuous control tasks from the D4RL benchmark validate its effectiveness and superior performance. The code and experiment results of \algbb are available on the \href{https://ai.gaocx.io/bdpo}{project webpage}.
\end{abstract}

\section{Introduction}\label{sec:intro}

\begin{figure}
    \centering
    \includegraphics[width=1.0\linewidth]{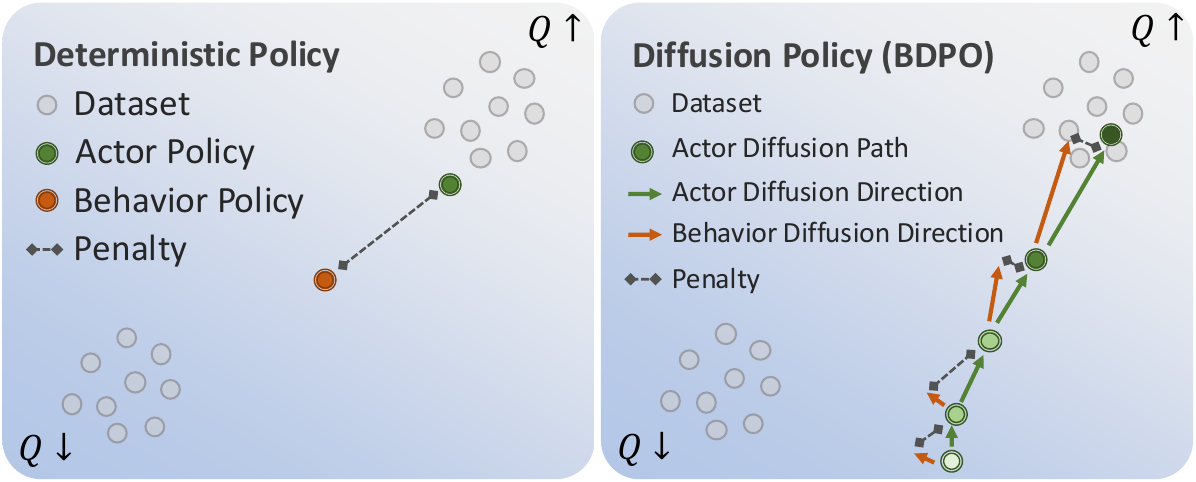}
    \caption{Illustration of the behavior-regularized RL framework with different policy parameterizations. Unimodal policies, such as deterministic policies (left), compute the behavior as the center of mass and therefore lead to misleading regularizations; while our method (right) harnesses the flexibility of diffusion models, and the regularization is calculated as the accumulated discrepancies in diffusion directions of the actor and the behavior diffusion. }
    \label{fig:intro_policy}
\end{figure}

Despite its huge success in industrial applications such as robotics~\citep{robotics}, game AI~\citep{alphastar}, and generative model fine-tuning~\citep{instruct_gpt}, reinforcement learning (RL) algorithms typically require millions of online interactions with the environment to achieve meaningful optimization~\citep{sac,ppo}. Given that online interaction can be hazardous and costly in certain scenarios, offline RL, which focuses on optimizing policies with static datasets, emerges as a practical and promising avenue~\citep{levine_survey}. 

However, without access to the real environment, RL algorithms tend to yield unrealistic value estimations for actions that are absent from the dataset~\citep{levine_survey,td3bc}. Since RL necessitates querying the values of unseen actions to further refine its policy beyond the dataset, it is prone to exploiting the values of those unseen actions, leading to serious overestimation in value function. To manage this risk, one predominant approach is to employ behavior regularization~\citep{bcq,bear,brac,rebrac}, which augments the usual RL objectives by incorporating penalty terms that constrain the policy to remain close to the behavior policy that collects the datasets. In this way, the policy is penalized for choosing unreliable out-of-distribution (OOD) actions and thus exercises the principle of pessimism in the face of uncertainty. 

Most preliminary works in offline RL assume explicit policy distributions~\citep{td3bc,sac,sacd}; for instance, the agent's policy is often modeled as a Gaussian distribution with parameterized mean and diagonal covariance, or a deterministic distribution that directly outputs the action. Despite the computational efficiency, such assumptions present significant challenges within the aforementioned behavior-regularized framework~\citep{dql,idql,sfbc}. The underlying difficulty originates from the multi-modality of the distributions and manifests in two ways: 1) the dataset may be collected from multiple policy checkpoints, meaning that using unimodal distributions to approximate the behavior policy for regularization can introduce substantial approximation errors (see Figure~\ref{fig:intro_policy}); 2) the optimal policy distribution in behavior-regularized RL framework is actually the Boltzmann distribution of the optimal value function~\cite{sql,awac}, which is inherently multi-modal as well. Such limitation motivates researchers to explore diffusion models, which generate actions through a series of denoising steps, as a viable parameterization of the policy~\citep{dql,idql,srpo,dtql,dac,zhang2024entropy}. 

Nonetheless, extending the behavior-regularized RL framework to diffusion policies faces specific challenges. Although diffusion models are capable of generating high-fidelity actions, the log-probability of the generated actions is difficult to compute~\citep{diffusion_sde}. Consequently, the calculation of regularization terms, such as Kullback-Leibler (KL) divergence, within the behavior-regularized RL framework remains ambiguous~\citep{zhang2024entropy,wang2024diffusion}. Besides, it is also unclear how to improve diffusion policies while simultaneously imposing effective regularization. In this paper, we introduce \algbb, which provides an efficient and effective framework tailored for diffusion policies. Specifically:

1) Framing the reverse process of diffusion models as an MDP, we propose to implement the KL divergence w.r.t. the diffusion generation path, rather than the clean action samples (Figure~\ref{fig:intro_policy}); 

2) Building upon this foundation, we propose a two-time-scale actor-critic method to optimize diffusion policies.  Instead of differentiating the policy along the entire diffusion path, \algbb estimates the values at intermediate diffusion steps to amortize the optimization, offering efficient computation, convergence guarantee, and state-of-the-art performance;

3) Experiments conducted on synthetic 2D datasets reveal that our method effectively approximates the target distribution. Furthermore, when applied to continuous control tasks provided by D4RL, \algbb demonstrates superior performance compared to baseline offline RL algorithms.

\section{Related Work}\label{sec:related_work}
\textbf{Offline RL. }
To improve beyond the interaction experience, RL algorithms need to query the estimated values of unseen actions for optimization. In offline scenarios, such distribution shift tends to cause serious overestimation in value functions due to the lack of \textit{corrective feedback}~\citep{discor}. To mitigate this, algorithms like CQL~\citep{cql}, EDAC~\citep{EDAC}, and PBRL~\citep{pbrl} focus on penalizing the Q-values of OOD actions to prevent the over-estimation issue. Behavior regularization provides another principled way to mitigate the distribution shift problem, by adding penalties for deviation from the dataset policy during the stage of policy evaluation, policy improvement~\cite{bcq,td3bc,prdc}, or sometimes both~\citep{brac,rebrac}. Another line of research, also based on the behavior-regularized RL framework, approaches offline RL by performing in-sample value iteration~\citep{iql,xql,ivr}, thus eliminating OOD queries from the policy and directly approximating the optimal value function. Lastly, model-based offline RL methods~\citep{mopo,redm,mobile,morec} introduce learned dynamics models that generate synthetic experiences to alleviate data limitations, providing extra generalization compared to model-free algorithms. 

\textbf{Diffusion Policies in Offline RL. }There has been a notable trend towards the applications of expressive generative models for policy parameterization~\citep{diffuser}, dynamics modeling~\citep{twm,dwm}, trajectory planning~\citep{dt,dd,act}, and representation learning~\citep{diffsr}. In offline RL, several works employ diffusion models to approximate the behavior policy used for dataset collection. To further improve the policy, they utilize in-sample value iteration to derive the Q-values, and subsequently select action candidates from the behavior diffusion model~\citep{idql,sfbc} or use the gradient of Q-value functions to guide the generation~\citep{qgpo,diffusiondice}. However, the performance of these methods is limited, as the actions come from behavior diffusion. Alternatively, SRPO~\citep{srpo} and DTQL~\citep{dtql} maintain a simple one-step policy for optimization while harnessing the diffusion model or diffusion loss to implement behavior regularization. This yields improved computational efficiency and performance in practice; however, the single-step policy still restricts expressiveness and fails to encompass all of the modes of the optimal policy. A wide range of works therefore explore using diffusion models as the actor. Among them, DAC~\citep{dac} formulates the optimization as a noise-regression problem and proposes to align the output from the policy with the gradient of the Q-value functions. Diffusion-QL~\citep{dql} optimizes the actor by back-propagating the gradient of Q-values throughout the entire diffusion path. This results in a significant memory footprint and computational overhead, and EDP~\citep{edp} proposes to use action approximations to alleviate the cost. In contrast, \algbb maintains value functions for intermediate diffusion steps, thereby amortizing the optimization cost while also keeping the optimization precise. 

\section{Preliminaries}\label{sec:preliminary}
\textbf{Behavior-Regularized Offline RL. }We formalize the task as a Markov Decision Process (MDP) $\langle\mathcal{S}, \mathcal{A}, T, R, \gamma\rangle$, where $\mathcal{S}$ is the state space, $\mathcal{A}$ is the action space, $T(s'|s, a)$ denotes the transition function, $R(s, a)$ is a bounded reward function, and $\gamma$ is the discount factor. In offline RL, an agent is expected to learn a policy $\pi: \mathcal{S}\rightarrow \Delta(\mathcal{A})$ to maximize the expected discounted return $\mathbb{E}_{\pi}[\sum_{t=0}^\infty\gamma^tR(s_t, a_t)]$ with an offline dataset $\mathcal{D}=\{(s_t,a_t,s_{t+1},r_t)\}$, where $s_t, s_{t+1}\in\mathcal{S}$, $a_t\in\mathcal{A}$, and $r_t=R(s_t,a_t)\in\mathbb{R}$. 
We consider the behavior-regularized RL objective, which augments the original RL objective by regularizing the policy towards some behavior policy $\nu$:
\begin{equation}\label{eq:brl_obj}
    \begin{aligned}
    \resizebox{0.95\linewidth}{!}{$
        \max_{\pi}\ \mathbb{E}_{\pi}\left[\sum_{t=0}^\infty\gamma^t(r_t-\eta\KL{\pi(\cdot|s_t)}{\nu(\cdot|s_t)})\right],
        $}
    \end{aligned}
\end{equation}
where $D_{\rm KL}$ is the Kullback-Leibler (KL) divergence and $\eta>0$ controls the regularization strength. In offline RL, \eqref{eq:brl_obj} is widely employed by setting $\nu$ as the policy $\pi_{\mathcal{D}}$ that collects the dataset to prevent the exploitation of the out-of-dataset actions. Besides, when setting $\nu$ as the uniform distribution, \eqref{eq:brl_obj} equates to the maximum-entropy RL in online scenarios up to some constant. 

To solve \eqref{eq:brl_obj}, a well-established method is soft policy iteration~\citep{sac,brac}. Specifically, we define the soft value functions as
\begin{equation}\label{eq:expected_q}
    \begin{aligned}
        V^\pi(s)=\mathbb{E}_\pi\left[\sum_{t=0}^\infty\gamma^t\left(r_t-\eta\log\frac{\pi(a_t|s_t)}{\nu(a_t|s_t)}\right)\right], 
    \end{aligned}
\end{equation}
where the expectation is taken w.r.t. random trajectories generated by $\pi$ under the initial condition $s_0=s$ and $a_0=a$. The soft $Q$-value function in this framework can be solved by the repeated application of the soft Bellman operator $\mathcal{B}^\pi$:
\begin{equation}\label{eq:expected_bellman}
    \begin{aligned}
        \mathcal{B}^\pi Q^\pi(s, a)=R(s, a)+\gamma \mathbb{E}\left[Q^\pi(s', a')-\eta \log \frac{\pi(a'|s')}{\nu(a'|s')}\right],
    \end{aligned}
\end{equation}
where $s'\sim T(\cdot|s, a)$ and $a'\sim\pi(\cdot|s')$.
For policy improvement, we can update the policy using the objective:
\begin{equation}\label{eq:expected_pi}
    \begin{aligned}
        \max_{\pi}\ \mathbb{E}_{a\sim \pi(\cdot|s)}\left[Q^\pi(s, a)\right] - \eta\KL{\pi(\cdot|s)}{\nu(\cdot|s)}. 
    \end{aligned}
\end{equation}
Note that regularization is added for both $Q$-value functions and the policy. By iterating between policy evaluation and improvement, the performance of the policy defined by \eqref{eq:brl_obj} is guaranteed to improve~\citep{sac}.

\textbf{Diffusion Models. }Diffusion models~\citep{ddpm,diffusion_sde} consist of a forward Markov process which progressively perturbs the data $x^0\sim q_0$ to data that approximately follows the standard Gaussian distribution $x^N\sim q_N$, and a reverse Markov process that gradually recovers the original $x^0$ from the noisy sample $x^N$. The transition of the forward process $q_{n+1|n}$ usually follows Gaussian distributions:
\begin{equation}
    \begin{aligned}
        q_{n|n-1}(x^{n}|x^{n-1})=\mathcal{N}(x^{n}; \sqrt{1-\beta_{n}}x^{n-1}, \beta_{n}I),
    \end{aligned}
\end{equation}
where $\{\beta_{n}\}_{n=1}^N$ is specified according to the \textit{noise schedule} and $I$ denotes the Identity matrix. Due to the closure property of Gaussian distributions, the marginal distribution of $x^n$ given $x^0$ can be specified as:
\begin{equation}
    \begin{aligned}
        q_{n|0}(x^n|x^0)=\mathcal{N}(x^n; \sqrt{\alphabar_n}x^0, (1-\alphabar_n)I),
    \end{aligned}
\end{equation}
where $\alpha_n=1-\beta_n, \alphabar_n=\prod_{n'=1}^n \alpha_{n'}$. The transition of the reverse process can be derived from Bayes' rule, 
\begin{equation}
    \begin{aligned}
        q_{n-1|n}(x^{n-1}|x^{n})=\frac{q_{n|n-1}(x^{n}|x^{n-1})q_{n-1}(x^{n-1})}{q_{n}(x^{n})}.
    \end{aligned}
\end{equation}
However, it is usually intractable, and therefore we use a parameterized neural network $p^\theta_{n-1|n}$ to approximate the reverse transition, which is also a Gaussian distribution with parameterized mean:
\begin{equation}\label{eq:reverse_diff}
    \begin{aligned}
        p^\theta_N(x^N)&=\mathcal{N}(0, I),\\
        p^\theta_{n-1|n}(x^{n-1}|x^{n})&=\mathcal{N}(x^{n-1}; \mu^\theta_{n}(x^{n}), \sigma^2_{n}I),\\
    \end{aligned}
\end{equation}
where $\sigma_n=\sqrt{\frac{1-\bar{\alpha}_{n-1}}{1-\bar{\alpha}_n}\beta_n}\approx \sqrt{\beta_n}$. The learning objective is matching $p^\theta_{n-1|n}(x^{n-1}|x^{n})$ with the posterior $q_{n-1|n,0}(x^{n-1}|x^{n},x^0)$:
\begin{equation}
\label{eq:elbo}
\scalemath{0.95}{
\begin{aligned}
    &\mathcal{L}_{\rm diff}(\theta) = \\
    &\mathbb{E}_{n,x^0,x^n}\left[\KL{q_{n-1|n,0}(x^{n-1}|x^{n},x^0)}{p^\theta_{n-1|n}(x^{n-1}|x^{n})}\right],
\end{aligned}
}
\end{equation}
where $x^0\sim q_0, x^n\sim q_{n|0}(\cdot|x^0)$. Upon the completion of training, it can be shown that the objective \eqref{eq:elbo} yields 
\begin{equation}
    \begin{aligned}
        p^\theta_{n-1|n}(x^{n-1}|x^n)\approx q_{n-1|n}(x^{n-1}|x^n).
    \end{aligned}
\end{equation}
Simplified training objectives can be derived by reparameterizing $\mu^\theta$ with noise prediction or score networks \citep{ddpm,diffusion_sde}. 
After training, the generation process begins by sampling $\hat{x}^N \sim \mathcal{N}(0, I)$, followed by iteratively applying $p^\theta_{n-1|n}$ to generate the final samples $\hat{x}_0$, which approximately follow the target distribution $q_0$.

\begin{figure}
    \centering
    \includegraphics[width=0.9\linewidth]{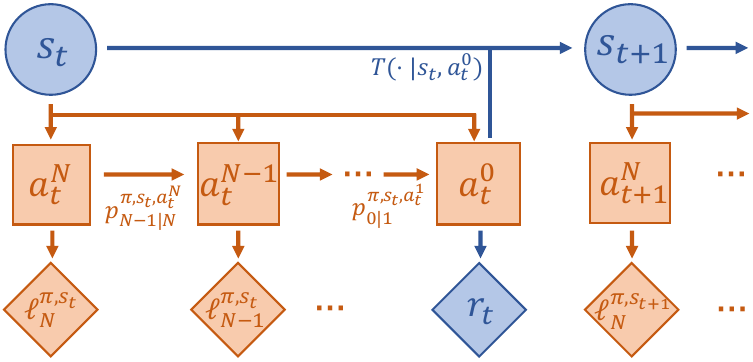}
    \caption{Semantic illustration of the interplay between diffusion policies and the environment. We use orange to denote the transition $p_{n-1|n}^{\pi,s_t,a^n}$ and the penalty $\ell_{n}^{\pi,s_t}$ (see Section~\ref{sec:pathwise_kl}) associated with the diffusion generation process, whereas blue signifies the transition $T(\cdot|s_t,a_t^0)$ and the reward $r_t$ from the original environment MDP.}
    \label{fig:intro_mdp}
    \vspace{-3mm}
\end{figure}

\textbf{Diffusion Policy. }Diffusion policies are conditional diffusion models that generate action $a$ on a given state $s$. In this paper, we will use $p^\pi$ and $p^{\pi, s}$ to denote the diffusion policy and its state-conditioned version, respectively. Similarly, we will use $p^{\pi,s,a^n}_{n-1|n}$ as a shorthand for the single-step reverse transition conditioned on $a^n$, i.e., $p^{\pi,s,a^n}_{n-1|n}=p^{\pi,s}_{n-1|n}(\cdot|a_n)$. At timestep $t$ of the environment MDP, the agent observes the state $s_t$, drives the reverse diffusion process $a^{0:N}\sim p^{\pi,s_t}_{0:N}$ as defined in \eqref{eq:reverse_diff}, and takes $a^0$ as its action $a_t$. For action $a_t^n$, we will use $t$ in the subscript to denote the timestep in the environment MDP, while using $n$ in the superscript to denote the diffusion steps. A semantic illustration of the environment MDP and the reverse diffusion is provided in Figure~\ref{fig:intro_mdp}.

\section{Method}\label{sec:method}

\subsection{Pathwise KL Regularization}\label{sec:pathwise_kl}

In behavior-regularized RL, previous methods typically regularize action distribution, i.e., the marginalized distribution $p^{\pi, s}_{0}$. Instead, we shift our attention to the KL divergence with respect to the diffusion path $a^{0:N}$, which can be further decomposed thanks to the Markov property: 
\begin{equation}\label{eq:pathwise_kl}
    \begin{aligned}
        &\KL{p^{\pi, s}_{0:N}}{p^{\nu, s}_{0:N}}\\
        &=\mathbb{E}\left[\log\frac{p^{\pi, s}_{0:N}(a^{0:N})}{p^{\nu, s}_{0:N}(a^{0:N})}\right]\\
        &=\mathbb{E}\left[\log \frac{p_N^{\pi,s}(a^N)\prod_{n=1}^Np_{n-1|n}^{\pi,s}(a^{n-1}|a^n)}{p_N^{\nu,s}(a^N)\prod_{n=1}^Np_{n-1|n}^{\nu,s}(a^{n-1}|a^n)}\right]\\
        &=\mathbb{E}\left[\log \frac{p^{\pi, s}_{N}(a^{N})}{p^{\nu, s}_{N}(a^{N})}+\sum_{n=1}^{N}\log\frac{p^{\pi, s,a^n}_{n-1|n}(a^{n-1})}{p^{\nu, s,a^n}_{n-1|n}(a^{n-1})}\right]\\
        &=\mathbb{E}\left[\sum_{n=1}^{N}\KL{p^{\pi, s,a^n}_{n-1|n}}{p^{\nu, s,a^n}_{n-1|n}}\right]. 
    \end{aligned}
\end{equation}
Here, the expectation is taken w.r.t. $a^{0:N}\sim p^{\pi, s}_{0:N}$, $p^{\nu}$ is the behavior diffusion policy trained via \eqref{eq:elbo} on the offline dataset $\mathcal{D}$ to approximate the data-collection policy, and the last equation holds due to $p_N^{\pi,s}=p_{N}^{\nu,s}=\mathcal{N}(0, I)$. In the following, we abbreviate $\KL{p^{\pi, s,a^n}_{n-1|n}}{p^{\nu, s,a^n}_{n-1|n}}$ with $\ell^{\pi,s}_{n}(a^n)$.

Based on this decomposition, we present the pathwise KL-regularized RL problem.
\begin{definition}\label{problem:pathwise_kl_brl}
    \textit{(Pathwise KL-Regularized RL)} Let $p^{\nu}$ be the behavior diffusion process. The pathwise KL-regularized RL problem seeks to maximize the following objective
    \begin{equation}
    \label{eq:pathwise_kl_obj}
        \max_{p^\pi}\  \mathbb{E}\left[\sum_{t=0}^\infty \gamma^t\left(R(s_t,a_t^0)-\eta\sum_{n=1}^N\ell_{n}^{\pi,s_t}(a_t^n)\right)\right].
    \end{equation}
where the expectation is taken w.r.t. trajectories following $p^\pi$ and the dynamics $T$.
\end{definition}

Although our problem calculates the accumulated discrepancies along the diffusion path as penalties, it actually preserves the same optimal solution to \eqref{eq:brl_obj}, which only regularizes the action distribution at diffusion step $n=0$. The following theorem captures the equivalence. 
\begin{theorem}{(Proof in Appendix~\ref{thm:pathwise_kl_equivalence})}
    Let $p^\nu$ be the behavior diffusion process. The optimal diffusion policy $p^{*}$ of the pathwise KL-regularized RL problem in \eqref{eq:pathwise_kl_obj} is also the optimal policy $\pi^*$ of the KL regularized objective in \eqref{eq:brl_obj}, in the sense that $\pi^*(a|s)=\int p^{*, s}_{0:N}(a^{0:N})\delta(a-a^0)\mathrm{d}a^{0:N} \ \forall s\in\mathcal{S}$, where $\delta$ is the Dirac delta function. 
\end{theorem}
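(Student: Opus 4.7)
The plan is to reduce the pathwise KL-regularized problem to the standard KL-regularized problem by exploiting (a) the chain rule for KL divergence, and (b) the fact that the environment only responds to the clean action $a^0$. First I would invoke the chain rule to split the pathwise penalty as
\[
\KL{p^{\pi,s}_{0:N}}{p^{\nu,s}_{0:N}} = \KL{p^{\pi,s}_0}{p^{\nu,s}_0} + \mathbb{E}_{a^0\sim p^{\pi,s}_0}\KL{p^{\pi,s}_{1:N|0}(\cdot|a^0)}{p^{\nu,s}_{1:N|0}(\cdot|a^0)},
\]
which is just a regrouping of the per-step decomposition in (11). The first summand is exactly the KL between the induced marginal action distributions (identifying the $\nu$ in (1) with the marginal of $p^\nu$), while the second is a nonnegative slack term measuring the disagreement between the two reverse generation processes conditioned on $a^0$.

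Next I would observe that because $R$ and $T$ depend only on $a^0$, the pathwise objective depends on $p^\pi$ through only two ingredients at each state $s_t$: the marginal $\pi_t(\cdot|s_t):=p^{\pi,s_t}_0$ and the conditional reverse kernels $p^{\pi,s_t}_{1:N|0}$. Freezing the marginal $\pi_t$, the reward, the transition, and the marginal-KL term are all held fixed, and the only remaining $p^\pi$-dependence is through the nonnegative conditional-KL slack, which is uniquely minimized (at zero) by taking $p^{\pi,s_t}_{1:N|0}(\cdot|a^0)=p^{\nu,s_t}_{1:N|0}(\cdot|a^0)$ almost surely. Under this choice, the pathwise objective (12) collapses exactly onto the standard KL-regularized objective (1) evaluated at the marginal $\pi_t$. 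Hence the maximum of (12) over diffusion policies is upper-bounded by the maximum of (1) over action policies, and equality is realized by any diffusion policy whose marginal is the optimal $\pi^*$ of (1) and whose reverse kernels conditioned on $a^0$ coincide with those of $p^\nu$. Therefore $p^*$'s marginal distribution on $a^0$ equals $\pi^*$, yielding the claimed equivalence.

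The main obstacle I expect is handling the interleaving between the outer environment-trajectory expectation and the inner per-state chain-rule decomposition cleanly. Since (11) is pointwise in $s$, it must be applied inside the outer expectation by conditioning on the environment trajectory $(s_t,a_t^0)_{t\ge 0}$ and invoking the tower property before the $p^\pi$-dependent dynamics come into play; the argument then proceeds by exchanging the time sum with the per-state conditional KL decomposition. A secondary point is to verify that the minimizer $p^{\pi,s_t}_{1:N|0}=p^{\nu,s_t}_{1:N|0}$ is attainable within the chosen class of diffusion policies; because both processes share the same terminal distribution $\mathcal{N}(0,I)$ and the same noise schedule, this conditional is well-defined and realizable, so the equivalence argument goes through without further caveat.
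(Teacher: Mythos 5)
Your argument is correct, and it takes a genuinely different route from the paper's. The paper's proof goes through an auxiliary result (Theorem~\ref{thm:optim_is_reverse_process}): a Lagrangian/soft-value recursion characterizes the optimal reverse kernels in closed form, showing that $p^{*,s}$ is exactly the reverse of the forward noising process initialized at $\pi_{Q^*}(a|s)\propto\nu(a|s)\exp(Q^*(s,a)/\eta)$; the pathwise KL then collapses to $\KL{\pi_{Q^*}(\cdot|s)}{\nu(\cdot|s)}$ because the two forward processes share the same transition kernels, and finally the Bellman recursions of the two problems are matched. You instead regroup the chain rule for KL by conditioning on $a^0$ rather than on $a^N$, split the pathwise penalty into the marginal action KL plus a nonnegative conditional slack, and note that the environment dynamics, reward, and marginal KL depend on $p^\pi$ only through $p^{\pi,s}_0$, so the slack can be zeroed independently. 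Your route is more elementary and never needs to exhibit the optimizer; the paper's route buys an explicit formula for $p^{*,s}_{n-1|n}$ as the reverse of a tilted forward process, which it reuses elsewhere (e.g., the exact-energy-guidance discussion). One step you should make explicit: attainability of the zero-slack conditional. The joint $\pi^*(a^0|s)\,p^{\nu,s}_{1:N|0}(a^{1:N}|a^0)$ equals $p^{\nu,s}_{0:N}$ reweighted by a function of the endpoint $a^0$ only, which preserves Markovianity (a Doob $h$-transform), so it does factor as a reverse Markov chain; however its marginal at step $N$ is only approximately $\mathcal{N}(0,I)$ and its transitions are not exactly Gaussian, so the minimizer lies in the nonparametric idealization of the policy class rather than the literal Gaussian parameterization. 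This is the same idealization the paper makes implicitly in \eqref{eq:pathwise_kl} and in Theorem~\ref{thm:optim_is_reverse_process}, so it is not a gap relative to the paper, but it is worth stating. You should also keep explicit, as you do, that the $\nu$ appearing in \eqref{eq:brl_obj} is identified with the action marginal $p^{\nu,s}_0$ of the behavior diffusion.
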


That said, we can safely optimize \eqref{eq:pathwise_kl_obj} and ultimately arrive at the solution that also maximizes the original KL-regularized RL problem. As will be demonstrated in the following sections, our formulation of the penalty leads to an analytical and efficient algorithm, since each single-step reverse transition is tractable.

\begin{figure*}
    \centering
    \includegraphics[width=0.9\linewidth]{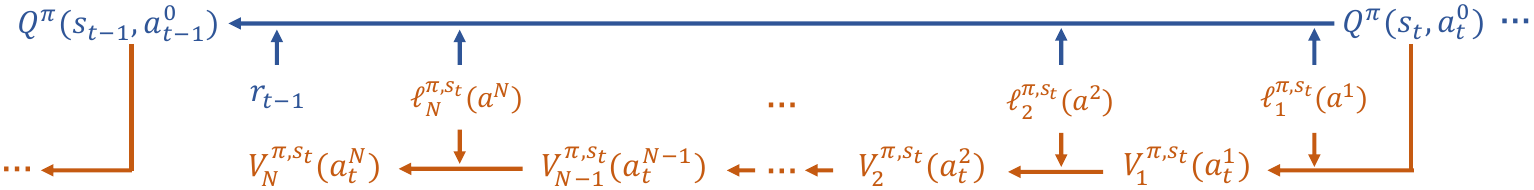}
    \caption{Semantic illustration of the TD backup for the Q-value function $Q^\pi$ (blue) and diffusion value function $V^{\pi,s}_n$ (orange). The update of $Q^\pi$ (\eqref{eq:upper_critic}) requires reward, penalties along the diffusion trajectory, and the Q-values at the next state. The update of $V^{\pi,s}_n$ (\eqref{eq:intermediate_value}) involves the single-step penalty and the diffusion value at the next diffusion step $n-1$. }
    \label{fig:intro_backup}
\end{figure*}

\subsection{Actor-Critic across Two Time Scales}
To solve the pathwise KL-regularized RL (\eqref{eq:pathwise_kl_obj}), we can employ an actor-critic framework. Specifically, we maintain a diffusion policy $p^\pi$ and a critic $Q^\pi$. The update target of the critic $Q^\pi$ is specified as the standard TD target in the environment MDP:
\begin{equation}\label{eq:upper_critic}
\begin{aligned}
    \scalemath{0.95}{\mathcal{B}^\pi Q^\pi(s,a)=R(s, a)+\gamma \mathbb{E}\Big[Q^\pi(s', a'^0)-\eta\sum_{n=1}^N\ell^{\pi,s'}_{n}(a'^n)\Big],}
\end{aligned}
\end{equation}

i.e., we sample diffusion paths $a'^{0:N}$ at the next state $s'$, calculate the Q-values $Q^\pi(s', a'^0)$ and the accumulated penalties along the path, and perform a one-step TD backup. For the diffusion policy, its objective can be expressed as:
\begin{equation}\label{problem:path_diff}
    \begin{aligned}
        \max_{p^{\pi,s}}\ 
        \underset{{a^{0:N}\sim p^{\pi,s}_{0:N}}}{\mathbb{E}}\left[Q^\pi(s, a^0)-\eta\sum_{n=1}^N\ell^{\pi,s}_{n}(a^n)\right].
    \end{aligned}
\end{equation}

At first glance, one might treat the pathwise KL as a whole and optimize the diffusion policy by back-propagating the gradient throughout the sampled path $a^{0:N}$, similar to Diffusion-QL~\citep{dql}. Nevertheless, this requires preserving the computation graph of all diffusion steps and therefore incurs considerable computational overhead. Moreover, the inherent stochasticity of the diffusion path introduces significant variance during optimization, which can hinder the overall performance. 

Instead, we opt for a bi-level TD learning framework. The upper level updates $Q^\pi$ according to \eqref{eq:upper_critic} and operates in the environment MDP. In the lower level, we maintain \textit{diffusion value functions} $V^{\pi, s}_n(a^n)$, which are designed to estimate the values at intermediate diffusion steps. The TD target of $V^{\pi,s}_{n}$ is specified as:
\begin{equation}\label{eq:intermediate_value}
    \begin{aligned}
        &\mathcal{T}_0^{\pi}V_0^{\pi,s}(a^0)=Q^\pi(s,a^0),\\
        &\mathcal{T}_n^{\pi}V_n^{\pi,s}(a^n)=-\eta \ell^{\pi,s}_n(a^n)+\underset{ p^{\pi,s,a^n}_{n-1|n}}{\mathbb{E}}\left[V_{n-1}^{\pi,s}(a^{n-1})\right].
    \end{aligned}
\end{equation}
Intuitively, $V^{\pi, s}_n(a^n)$ receives the state $s$, intermediate generation result $a^n$, and the diffusion step $n$ as input, and estimates the expected cumulative penalties of continuing the diffusion path at $a^n$ and step $n$. It operates completely inside each environment timestep and performs TD updates across the diffusion steps. The semantic illustration of the value function backups is presented in Figure~\ref{fig:intro_backup}. The following proposition demonstrates the convergence of policy evaluation. 

With the help of $V^{\pi, s}_n$, we can update the policy according to
\begin{equation}\label{problem:step_diff}
    \begin{aligned}
    &\max_{p^{\pi,s,a^n}_{n-1|n}}\ \ -\eta\ell^{\pi,s}_n(a^n) + \underset{p^{\pi,s,a^n}_{n-1|n}}{\mathbb{E}}\left[V^{\pi,s}_{n-1}(a^{n-1})\right].
    \end{aligned}
\end{equation}

\begin{algorithm}[t]
\caption{Behavior-Regularized Diffuion Policy Optimization (\texttt{BDPO})}
\label{code}
\textbf{Input}: Offline dataset $\mathcal{D}$, initialized diffusion policy $p^\pi$ and $p^\nu$, Q-value networks $Q_{\psi_k}$, diffusion value networks $V_{\phi_k}$.

\begin{algorithmic}[1]
\STATE {\color{gray}// Pretrain $p^\nu$}
\STATE Pretrain the behavior diffusion policy $p^\nu$ via \eqref{eq:elbo}
\STATE Initialize $p^\pi$ with $p^\nu$
\STATE {\color{gray}// Train $p^\pi$}
\FOR{$i=1, 2, \cdots, N_{\text{total\_steps}}$}
\STATE Sample minibatch $B=\{(s_i, a_i, s'_i, r_i)\}_{i=1}^{|B|}$ from $\mathcal{D}$
\STATE Update $\{Q_{\psi_k}\}_{k=1}^K$ via \eqref{eq:critic_loss} using $B$
\STATE Extend each $(s_i,a_i,s'_i,r_i)$ in $B$ by sampling $n\sim U[1, N]$ and $a^{n}_t\sim q_{n|0}(\cdot|a_t)$
\STATE Update $\{V_{\phi_k}\}_{k=1}^K$ via \eqref{eq:critic_loss} using $B$
\IF{$i$ \% actor\_update\_interval == 0}
\STATE Update $p^\pi$ by performing gradient ascend with the objective defined in \eqref{problem:step_diff} and $B$
\ENDIF
\ENDFOR
\end{algorithmic}
\end{algorithm}

Note that the maximization is over every state $s$, diffusion step $n$, and intermediate action $a^n$. In this way, we only require single-step reverse diffusion during policy improvement, providing an efficient solution to the problem defined in~\eqref{problem:step_diff}.

Through simple recursion, we can demonstrate that updating $p^{\pi}$ with \eqref{problem:step_diff} leads to the same optimal solution to the original policy objective \eqref{problem:path_diff}.

\begin{assumption}
\label{ass}
Let $\Pi$ be the set of admissible diffusion policies satisfying $\forall\,p^\pi\in\Pi,\;\forall\,s\in\mathcal{S}$,
\[
\sup_{a^0\in\mathcal{A}}\;\frac{\pi(a^0|s)}{\nu(a^0|s)}=\frac{\int p_{0:N}^{\pi,s}(a^{0:N})\mathrm{d}a^{1:N}}{\int p_{0:N}^{\nu,s}(a^{0:N})\mathrm{d}a^{1:N}}<\infty,
\]
where $p^\nu$ is the behavior diffusion policy.
\end{assumption}
\begin{proposition}{(Soft Policy Improvement, proof in Appendix~\ref{proposition:policy_improvement})}\label{main_proposition:policy_improvement}
Let $p^{\pi_{\textrm{new}}}$ be the optimizer of the problem defined in \eqref{problem:step_diff}. Under Assumption~\ref{ass}, $V^{\pi_{\textrm{new}},s}_n(a^n)\geq V^{\pi_{\textrm{old}},s}_n(a^n)$ holds for all $n\in\{0, 1, \ldots, N\}$ and $(s, a)\in\mathcal{S}\times\mathcal{A}$.
\end{proposition}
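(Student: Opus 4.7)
The plan is to mirror the classical soft policy improvement argument, but to accommodate the bi-level structure of $V_n^{\pi,s}$ by carrying out the monotonicity analysis in two sweeps across the diffusion steps. First I would introduce an auxiliary diffusion value $\widetilde V_n^{s}(a^n)$ that runs the reverse process under $p^{\pi_{\textrm{new}}}$ but grounds the base case at the \emph{old} critic, i.e.\ $\widetilde V_0^{s}(a^0)=Q^{\pi_{\textrm{old}}}(s,a^0)$ and $\widetilde V_n^{s}(a^n)=-\eta\ell_n^{\pi_{\textrm{new}},s}(a^n)+\mathbb{E}_{p^{\pi_{\textrm{new}},s,a^n}_{n-1|n}}[\widetilde V_{n-1}^{s}(a^{n-1})]$. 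Assumption~\ref{ass} keeps the density ratios bounded, so every single-step KL penalty (and hence the recursion itself) is finite and well-defined. Here $p^{\pi_{\textrm{new}},s,a^n}_{n-1|n}$ is understood as the maximizer of \eqref{problem:step_diff} when the right-hand side is grounded at the old diffusion value $V_{n-1}^{\pi_{\textrm{old}},s}$, which is the standard policy-iteration reading of the objective.

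Next I would show by forward induction in $n$ that $\widetilde V_n^{s}(a^n)\ge V_n^{\pi_{\textrm{old}},s}(a^n)$ for every $n\in\{0,\dots,N\}$: the case $n=0$ is equality, and for the inductive step one lower-bounds $\widetilde V_n^{s}(a^n)$ by first plugging the induction hypothesis inside the expectation and then invoking the defining optimality of $p^{\pi_{\textrm{new}},s,a^n}_{n-1|n}$ in \eqref{problem:step_diff}, which dominates the same expression evaluated at $p^{\pi_{\textrm{old}},s,a^n}_{n-1|n}$ and thus equals $V_n^{\pi_{\textrm{old}},s}(a^n)$. Instantiating at $n=N$ and recognizing that $V_N^{\pi,s}(a^N)$ is precisely the path-integrated quantity appearing in the TD target \eqref{eq:upper_critic}, this rewrites as the environment-level inequality $\mathcal{B}^{\pi_{\textrm{new}}} Q^{\pi_{\textrm{old}}}\ge Q^{\pi_{\textrm{old}}}$. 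Iterating the soft Bellman operator $\mathcal{B}^{\pi_{\textrm{new}}}$, which is a $\gamma$-contraction on bounded functions, then yields $Q^{\pi_{\textrm{new}}}\ge Q^{\pi_{\textrm{old}}}$ pointwise.

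With the $Q$-level improvement in hand, I would run the same forward induction in $n$ a second time, now anchored at $V_0^{\pi_{\textrm{new}},s}(a^0)=Q^{\pi_{\textrm{new}}}(s,a^0)\ge Q^{\pi_{\textrm{old}}}(s,a^0)=V_0^{\pi_{\textrm{old}},s}(a^0)$; the identical two-step combination of induction hypothesis and one-step optimality propagates the bound to every diffusion step, which is the desired conclusion. The subtle part is the middle stage: promoting a diffusion-level inequality into a genuine environment-level monotone improvement of $Q$ requires that iterating $\mathcal{B}^{\pi_{\textrm{new}}}$ on $Q^{\pi_{\textrm{old}}}$ actually converges to $Q^{\pi_{\textrm{new}}}$. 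This is where Assumption~\ref{ass} is indispensable: it guarantees uniformly bounded pathwise KL penalties, and hence a well-defined contractive soft Bellman operator with a unique bounded fixed point; without such a bound the iteration could fail to converge and the comparison between $Q^{\pi_{\textrm{old}}}$ and $Q^{\pi_{\textrm{new}}}$ could break down.
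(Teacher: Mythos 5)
Your proposal is correct and follows essentially the same route as the paper's proof: the one-step optimality inequality from \eqref{problem:step_diff}, propagated through the diffusion steps to obtain $\mathcal{B}^{\pi_{\textrm{new}}}Q^{\pi_{\textrm{old}}}\geq Q^{\pi_{\textrm{old}}}$, then the standard monotone iteration of the contractive soft Bellman operator, and finally the same diffusion-level recursion to lift the bound to all $n$. The only difference is presentational — your auxiliary value $\widetilde V$ and two explicit induction sweeps package the chain of inequalities that the paper writes out directly.
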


Combining Proposition~\ref{main_proposition:policy_improvement} and a similar convergence argument in Soft Actor Critic~\cite{sac}, we obtain that the policy is guaranteed to improve w.r.t. the pathwise KL problem in Definition~\ref{problem:pathwise_kl_brl}. 

\begin{proposition}{(Policy Iteration, proof in Appendix~\ref{proposition:policy_iteration})}
Under Assumption~\ref{ass}, repeated application of soft policy evaluation in \eqref{eq:upper_critic} and \eqref{eq:intermediate_value} and soft policy improvement in \eqref{problem:step_diff} from any $p^\pi\in\Pi$ converges to a policy $p^{\pi^*}$ such that $V_n^{\pi^*,s}(a)\geq V_n^{\pi,s}(a)$ for all $p^\pi\in\Pi$, $n\in\{0,1,\ldots,N\}$, and $(s,a)\in\mathcal{S}\times\mathcal{A}$.
\end{proposition}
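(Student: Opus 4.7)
The plan is to adapt the classical soft policy iteration convergence argument to the two-time-scale structure of \algbb, combining the monotone improvement guarantee of Proposition~\ref{main_proposition:policy_improvement} with boundedness of the soft value functions and a fixed-point optimality argument, in the spirit of the SAC convergence theorem.

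First, I would verify that soft policy evaluation correctly computes $Q^\pi$ and $V^{\pi,s}_n$ for the current policy. Since the diffusion index $n$ ranges over the finite set $\{0,1,\ldots,N\}$, the backward recursion in \eqref{eq:intermediate_value} is exact finite-horizon dynamic programming: one sweep from $V_0^{\pi,s}(a^0)=Q^\pi(s,a^0)$ up to $V_N^{\pi,s}$ delivers the true diffusion values in $N$ steps, no contraction required. The outer operator $\mathcal{B}^\pi$ in \eqref{eq:upper_critic} is then a standard $\gamma$-contraction in sup norm, because the penalty $\eta\sum_{n=1}^N\ell_n^{\pi,s'}(a'^n)$ depends only on the (fixed) policy and is uniformly bounded under Assumption~\ref{ass}: the density-ratio bound forces every $\ell_n^{\pi,s}$ to be finite and uniformly controlled. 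Banach's fixed-point theorem then yields a unique $Q^\pi$ and hence a unique $V^{\pi,s}_n$ for each $p^\pi\in\Pi$.

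Next I would iterate evaluation and improvement to produce a sequence $\{p^{\pi_k}\}\subset\Pi$. By Proposition~\ref{main_proposition:policy_improvement}, $V_n^{\pi_k,s}(a)\leq V_n^{\pi_{k+1},s}(a)$ pointwise for every $n,s,a$. Boundedness of the reward together with the Assumption~\ref{ass} bound on per-step KL penalties provides a uniform upper bound for the discounted return defining $V_n^{\pi,s}$ across all $p^\pi\in\Pi$. Monotone bounded sequences converge pointwise, yielding limits $V_n^{\pi^*,s}(a)$ and a corresponding limit policy $p^{\pi^*}\in\Pi$. To upgrade convergence to optimality, I would note that $\pi^*$ must be a fixed point of the improvement step \eqref{problem:step_diff}: for every $s$, $n$, and $a^n$, the transition $p^{\pi^*,s,a^n}_{n-1|n}$ already maximizes the objective when the bootstrap $V_{n-1}^{\pi^*,s}$ is used, because otherwise one further improvement would strictly raise $V_n^{\pi_k,s}(a)$ at some point, contradicting the limit. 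This is exactly the soft Bellman optimality condition for $\pi^*$ against its own values. Applying the backward-induction comparison from the proof of Proposition~\ref{main_proposition:policy_improvement}, but now against an arbitrary $p^\pi\in\Pi$ using this optimality identity, delivers $V_n^{\pi^*,s}(a)\geq V_n^{\pi,s}(a)$ for all $p^\pi\in\Pi$, $n\in\{0,1,\ldots,N\}$, and $(s,a)\in\mathcal{S}\times\mathcal{A}$, which is the claim.

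The main obstacle I anticipate is handling the interaction between the two time scales in the limit: the outer Bellman update \eqref{eq:upper_critic} takes an expectation over full diffusion paths drawn from $p^{\pi_k}$, so pointwise monotone convergence of the inner diffusion values must be lifted to pointwise monotone convergence of $Q^{\pi_k}$ at the outer level, which in turn requires exchanging limits with expectations under a policy that is itself varying with $k$. Assumption~\ref{ass} should supply the uniform integrability needed for a dominated-convergence argument, since the bounded density ratio simultaneously controls the per-step KL penalties and the likelihoods along sampled diffusion paths; carefully tracking the quantifiers across the nested recursions --- diffusion steps inside each environment step inside the outer policy-iteration index --- is the delicate bookkeeping I expect to be the most error-prone part of a fully rigorous write-up.
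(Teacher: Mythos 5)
Your proposal is correct and follows essentially the same route as the paper's proof: monotone pointwise improvement from the soft policy improvement proposition, boundedness of the values under Assumption~\ref{ass} giving convergence, and a fixed-point argument at the limit (the converged policy maximizes \eqref{problem:step_diff} against its own values, so the backward-induction comparison yields optimality over all of $\Pi$). If anything, your write-up is more careful than the paper's, which compresses the final optimality step into a single asserted sentence and does not discuss the limit-exchange issues you flag.
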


\begin{figure*}
    \centering
    \includegraphics[width=0.95\linewidth]{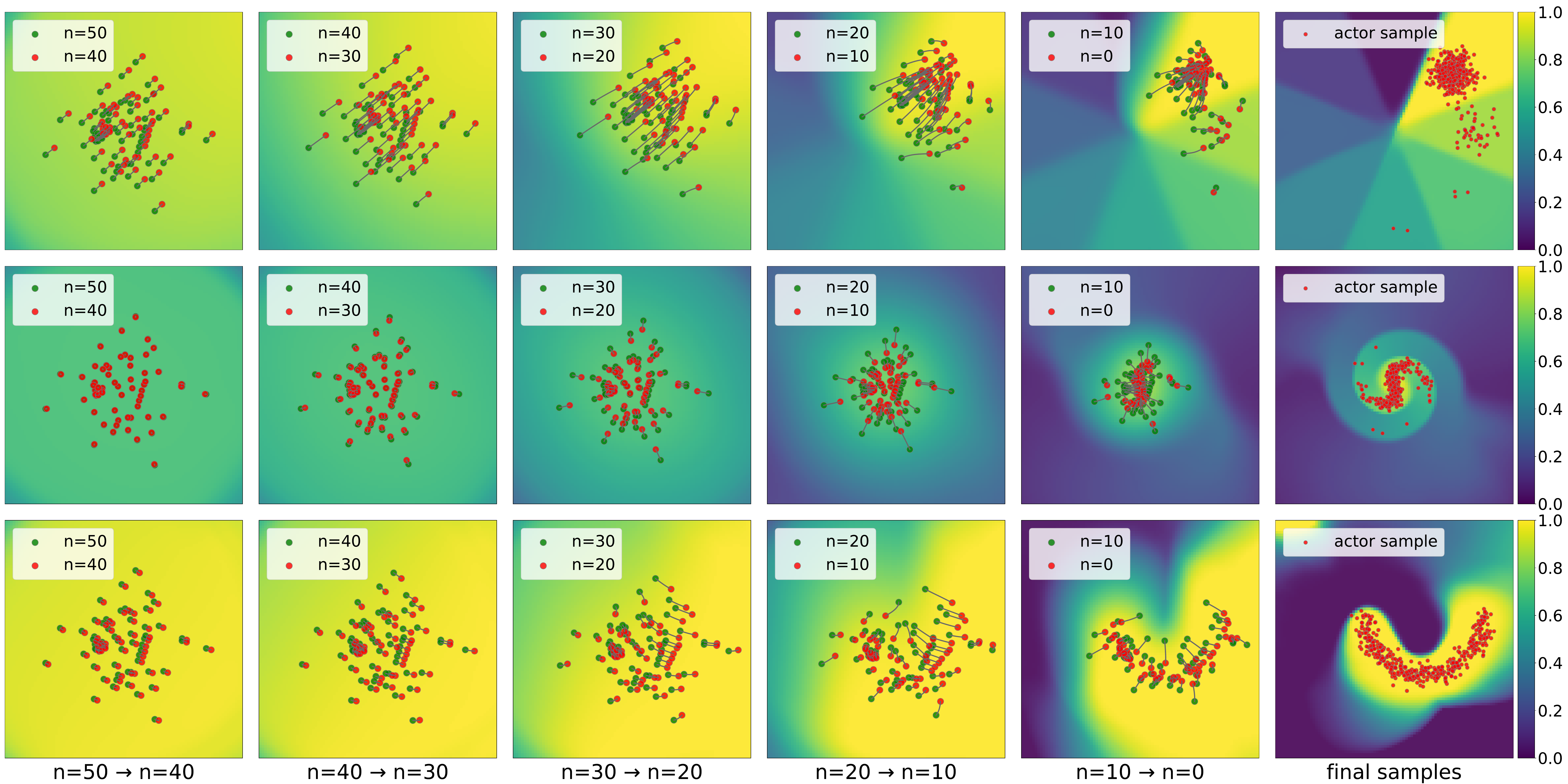}
    \caption{Generation paths of \algbb on the \textit{8gaussian} (top), \textit{2spirals} (middle), and \textit{moons} (down) datasets. The regularization strength is set to $\eta=0.06$, which is identical to Figure~\ref{fig:2d_data}. The first five columns depict the diffusion generation process at different time intervals, with green dots indicating the starting points of these intervals, red dots indicating the ending points, and grey lines in between representing intermediate samples. The background color depicts the output from the diffusion value functions $V^{\pi,s}_n$ in the entire 2D space. The rightmost figures depict the final action samples. We use DDIM sampling~\citep{ddim} for better illustration.}\label{fig:2d_actor}
\end{figure*}

\begin{figure}[h]
    \centering
    \includegraphics[width=0.9\linewidth]{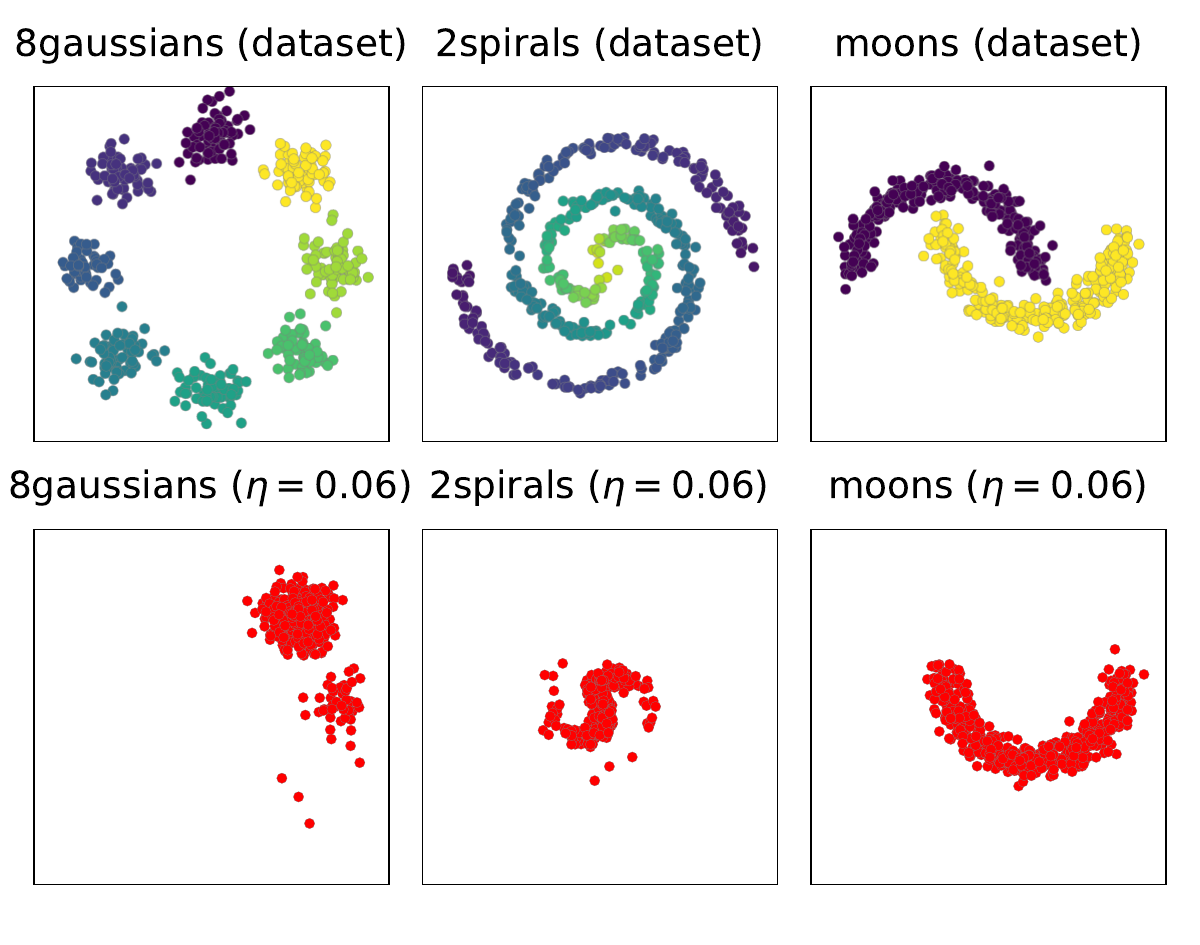}
    \caption{Illustration of the \textit{8gaussians}, \textit{2spirals}, and \textit{moons} datasets. The top row depicts the original data distribution $p_{\text{data}}$, while the second row depicts the target distribution $p_{\text{target}}$ at $\eta=0.06$ by re-sampling data points according to their energies.}\label{fig:2d_data}
\end{figure}

\textbf{Remark (Actor-critic across two time scales). }Our treatment of the environment MDP and the (reverse) diffusion MDP resembles DPPO~\citep{dppo}, which unfolds the diffusion steps into the environment MDP and employs the PPO~\citep{ppo} algorithm to optimize the diffusion policy in the extended MDP. Compared to DPPO, our method adds a penalty in each diffusion step to implement behavior regularization. Besides, we maintain two types of value functions to account for both inter-timestep and inner-timestep policy evaluation, thereby offering low-variance policy gradients. 

\subsection{Practical Implementation}
\textbf{Calculation of KL Divergence. }
Since each single-step reverse transition $p_{n-1|n}^{\pi,s,a_n}$ is approximately parameterized as an isotropic Gaussian distribution (\eqref{eq:reverse_diff}), the KL divergence is analytically expressed as
\begin{equation}
    \begin{aligned}
        \KL{p^{\pi, s,a^n}_{n-1|n}}{p^{\nu, s,a^n}_{n-1|n}}=\frac{\|\mu^{\pi,s}_n(a^n)-\mu^{\nu,s}_n(a^n)\|^2}{2\sigma_n^2}.
    \end{aligned}
\end{equation}
That is, each penalty term is simply the discrepancy between the mean vectors of the reverse transitions, weighted by a factor depending on the noise schedule. The transition of the reverse process becomes Gaussian exactly in the continuous limit, in which case, the KL divergence can be computed through Girsanov's theorem~\citep{oksendal2013stochastic}, and the sum of mean squared errors (MSEs) is replaced by an integral \cite{minde}. In Appendix~\ref{appsec:continuous_time}, we demonstrate that the pathwise KL is consistent with the result from Girsanov's theorem in the limit of infinitesimal diffusion step size.

\begin{table*}[h!]
\centering
\setlength{\tabcolsep}{1.1mm}{}
\caption{Comparison of \algbb and various baseline methods on locomotion-v2 and antmaze-v0 datasets from D4RL. We use `m' as the abbreviation for medium, `r' for replay, and `e' for expert. The performances of the baseline methods are taken from their original paper. For \algbb, we report the average and the standard deviation of the performances across 10 evaluation episodes (100 for antmaze datasets) and 5 seeds. We bold values that are within $95\%$ of the top-performing method.}
\label{tab:d4rl}
\small{
\begin{tabular}{lccccccccccc|r}
\toprule
\textbf{Dataset} & \textbf{CQL} & \textbf{IQL} & \textbf{DD} & \textbf{SfBC} & \textbf{IDQL-A} & \textbf{QGPO} & \textbf{SRPO} & \textbf{DTQL} & \textbf{Diffusion-QL} & \textbf{EDP} & \textbf{DAC} & \textbf{BDPO (Ours)}\\ 
\midrule
halfcheetah-m & 44.0 & 47.4 & 49.1 & 45.9 & 51.0 &  54.1 & 60.4 & 57.9 & 51.1 
& 52.1 & 59.1 & \textbf{71.2$\pm$0.9}\\ 
hopper-m & 58.5 & 66.3 & 79.3 & 57.1 & 65.4 & \textbf{98.0} & 95.5 & \textbf{99.6} & 90.5 & 81.9 & \textbf{101.2} & \textbf{100.6$\pm$0.7}\\ 
walker2d-m & 72.5 & 78.3 & 82.5 & 77.9 & 82.5 & 86.0 & 84.4 & 89.4 & 87.0 & 86.9 & \textbf{96.8} & \textbf{93.4$\pm$0.5}\\ 
halfcheetah-m-r & 45.5 & 44.2 & 39.3 & 37.1 & 45.9 & 47.6 & 51.4 & 50.9 & 47.8 & 49.4 &55.0 & \textbf{58.9$\pm$0.9}\\ 
hopper-m-r & 95.0 & 94.7 & \textbf{100.0} & 86.2 & 92.1 & 96.9 & \textbf{101.2} & \textbf{100.0} & 
\textbf{101.3} & \textbf{101.0} & \textbf{103.1} & \textbf{101.4$\pm$0.5}\\ 
walker2d-m-r & 77.2 & 73.9 & 75.0 & 65.1 & 85.1 & 84.4 & 84.6 & 88.5 & \textbf{95.5} & \textbf{94.9} & \textbf{96.8} & \textbf{95.5$\pm$1.6}\\ 
halfcheetah-m-e & 91.6 & 86.7 & 90.6 & 92.6 & 95.9 & 93.5 & 92.2 & 92.7 & 96.8& 95.5 &99.1  & \textbf{108.7$\pm$0.9}\\ 
hopper-m-e & \textbf{105.4} & 91.5 & \textbf{111.8} & \textbf{108.6} & \textbf{108.6} & \textbf{108.0} & 
100.1 &\textbf{109.3} & \textbf{111.1} & 
97.4 & \textbf{111.7} & \textbf{111.3$\pm$0.2}\\ 
walker2d-m-e & 108.8 & \textbf{109.6} & 108.8 & \textbf{109.8} & \textbf{112.7} & \textbf{110.7} & \textbf{114.0} & \textbf{110.0} & \textbf{110.1} & \textbf{110.2} & \textbf{113.6} & \textbf{115.6$\pm$0.4}\\ 
\midrule
\textbf{locomotion sum} & 698.5 & 749.7 & 736.2 & 680.3 & 739.2 & 779.2 & 783.8 & 798.3 & 791.2 & 769.3 & 836.4 & \textbf{856.7}\\ 
\midrule
\midrule
antmaze-u & 74.0 & 87.5 & - & 92.0 & 94.0 & \textbf{96.4} & 90.8 & \textbf{94.8} & 93.4 & 94.2 & \textbf{99.5} & \textbf{98.4$\pm$1.5} \\ 
antmaze-u-div & \textbf{84.0} & 62.2 & - & \textbf{85.3} & 80.2 & 74.4 & 59.0 & 78.8 & 66.2 & 79.0 & \textbf{85.0} & \textbf{83.0$\pm$12.4}\\ 
antmaze-m-play & 61.2 & 71.2 & - & 81.3 & 84.5 & 83.6 & 73.0 & 79.6 & 
76.6 & 81.8 & 85.8 & \textbf{92.0$\pm$2.9}\\ 
antmaze-m-div & 53.7 & 70.0 & - & 82.0 & 84.8 & 83.8 & 65.2 & 82.2 & 78.6 & 
82.3 & 84.0 & \textbf{93.2$\pm$2.8}\\ 
antmaze-l-play & 15.8 & 39.6 & - & 59.3 & 63.5 & \textbf{66.6} & 38.8 & 52.0 & 46.4 & 42.3 & 50.3 & \textbf{69.0$\pm$8.0}\\ 
antmaze-l-div & 14.9 & 47.5 & - & 45.5 & 67.9 & 64.8 & 33.8 & 54.0 & 56.6 & 60.6 & 55.3 & \textbf{84.0$\pm$3.2}\\ 
\midrule
\textbf{antmaze sum} & 303.6 & 378.0 & - & 445.4 & 474.9  & 469.6 & 360.6 & 441.4 & 417.8 & 440.2 & 459.9 & \textbf{519.6}\\ 
\bottomrule
\end{tabular}
}
\end{table*}

\textbf{Selection of States, Diffusion Steps and Actions. }The policy improvement step requires training $V^{\pi,s}_n$ and improving $p^\pi$ on every $(s, a^n, n)$ triplet sampled from the on-policy distribution of $p^\pi$. However, we employ an off-policy approach, by sampling $(s, a)$ from the dataset, $n$ according to the Uniform distribution $U[1, N]$ and $a^n$ according to $q_{n|0}(\cdot|a)$, which we found works sufficiently well in our experiments. We note that on-policy sampling $(s, a^n, n)$ may further benefit the performance at the cost of sampling and restoring these triplets on the fly, similar to what iDEM~\citep{idem} did in its experiments. 

\textbf{Lower-Confidence Bound (LCB) Value Target. }Following existing practices~\citep{dpqe,dac}, we use an ensemble of $K=10$ value networks $\{\psi_k, \phi_k\}_{i=k}^{K}$ for both $V^{\pi,s}_{n}$ and $Q^\pi$ and calculate their target as the LCB of \eqref{eq:upper_critic} and \eqref{eq:intermediate_value}:
\begin{equation}
\begin{aligned}
    y_Q&=\mathrm{Avg}_k[\mathcal{B}^\pi Q^\pi_{\bar{\psi}_k}]-\rho \sqrt{\mathrm{Var}_k[\mathcal{B}^\pi Q^\pi_{\bar{\psi}_k}]},\\
    y_V&=\mathrm{Avg}_k[\mathcal{T}_n^\pi V^{\pi,s}_{n,\bar{\phi}_k}]-\rho \sqrt{\mathrm{Var}_k[\mathcal{T}_n^\pi V^{\pi,s}_{n,\bar{\phi}_k}]},
\end{aligned}
\end{equation}
where $\bar{\phi}_k$ and $\bar{\psi}_k$ are exponentially moving average versions of the value networks, and $\mathrm{Avg},\mathrm{Var}$ denote averages and variances respectively. The objectives for value networks are to minimize the mean-squared error between the prediction and the target:
\begin{equation}\label{eq:critic_loss}
    \begin{aligned}
        \mathcal{L}(\{\psi_k\}_{k=1}^{K})&=\mathbb{E}_{(s, a, s',r)\sim\mathcal{D}}\left[\sum_{k=1}^K(y_Q - Q^\pi_{\psi_k}(s, a))^2\right],\\
        \mathcal{L}(\{\phi_k\}_{k=1}^{K})&=\mathbb{E}_{(s, a)\sim\mathcal{D}, n,a^n}\left[\sum_{k=1}^K(y_V - V^{\pi,s}_{n,\phi_k}(a^n))^2\right],
    \end{aligned}
\end{equation}
where $n$ is sampled from $U[1, N]$ and $a^n$ from $q_{n|0}(\cdot|a)$. 

The pseudo-code for \algbb is provided in Algorithm~\ref{code}.

\section{Experiments}\label{sec:experiments}

We evaluate \algbb with synthetic 2D tasks and also the D4RL benchmark~\citep{d4rl}. Due to the space limit, a detailed introduction about these tasks and the hyperparameter configurations is deferred to Section~\ref{appsec:intro_benchmark} and Section~\ref{appsec:hyperparameters}. More experimental results are deferred to Appendix~\ref{appsec:experiment}.

\begin{figure*}[htbp]
    \centering
    \includegraphics[width=\textwidth]{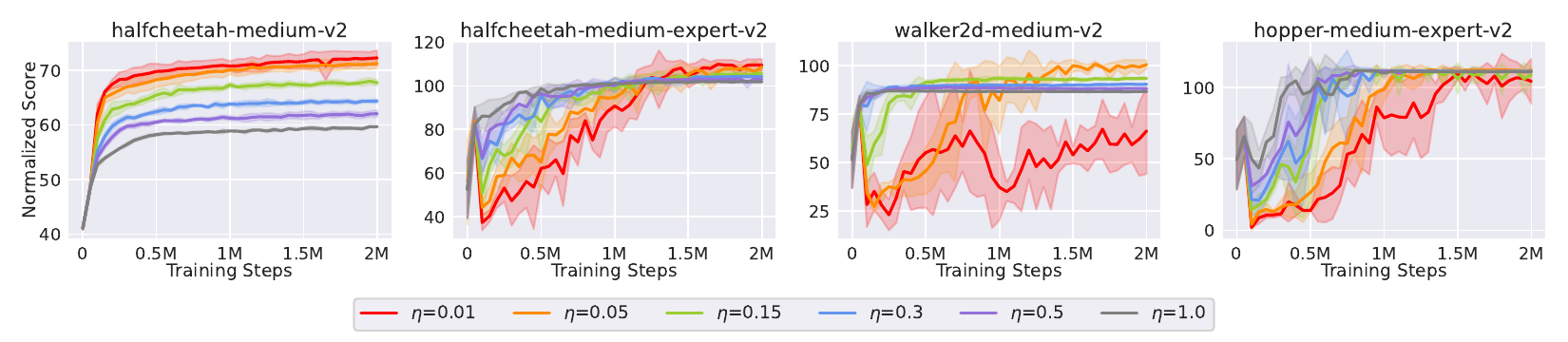}
    \caption{Sensitivity analysis of the regularization strength $\eta$. For each configuration, we report the mean and the standard deviation of the performances aggregated from 5 independent seeds and 10 evaluation episodes for each seed. }
    \label{exp:sensitivity_eta}
\end{figure*}

\begin{figure*}[htbp]
    \centering
    \includegraphics[width=\textwidth]{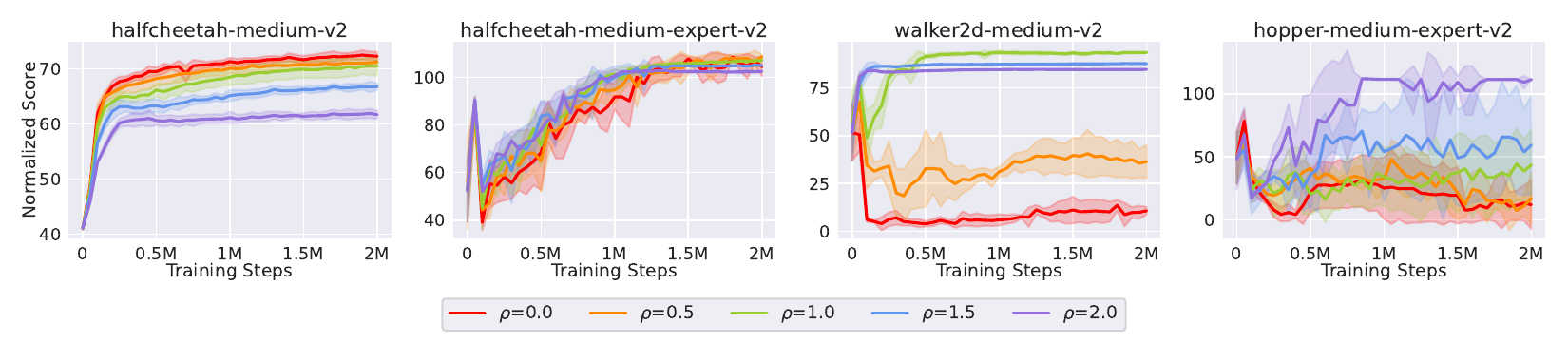}
    \caption{Sensitivity analysis of the lower confidence bound coefficient $\rho$. For each configuration, we report the mean and the standard deviation of the performances aggregated from 5 independent seeds and 10 evaluation episodes for each seed. }
    \label{exp:sensitivity_rho}
\end{figure*}

\subsection{Results of Synthetic 2D Datasets}
We leverage synthetic 2D datasets used in \citet{qgpo} to further our understanding of the mechanism of \algbb. Each dataset contains data points $x_i$ paired with specific energy values $\mathcal{E}(x_i)$. By re-sampling the data according to the Boltzmann distribution $ p(x_i)\propto \exp(\mathcal{E}(x_i)/\eta)$, we can obtain the ground truth target distribution at a specific regularization strength $\eta$ \citep{sac,awac}:
$$
p_{\text{target}}(x_i)\propto p_{\text{data}}(x_i)\exp(\mathcal{E}(x_i)/\eta).
$$
As an example, Figure~\ref{fig:2d_data} illustrates both the original dataset and the re-sampled dataset with $\eta=0.06$ for the \textit{8gaussians}, \textit{2spirals}, and \textit{moons} datasets. 

Our objective is to analyze the properties of the diffusion policy and the diffusion value function trained with \algbb. The results are depicted in Figure~\ref{fig:2d_actor} (full results on all datasets in Figure~\ref{appfig:2d_actor}), where each row corresponds to a different dataset and visualizes the iterative sampling process. From left to right, the samples evolve from an initial noisy distribution toward a well-defined structure as the reverse diffusion progresses. In the initial steps ($n=50$ to $n=40$), the sample movement is subtle, while in the later steps ($n=30$ to $n=0$), the samples rapidly converge to the nearest modes of the data. The final column on the right presents the ultimate samples generated by the policy, which closely match the ground-truth target distribution visualized in Figure~\ref{fig:2d_data}.

Besides the generation result, the background color in Figure~\ref{fig:2d_actor} depicts the landscapes of the diffusion value functions $V^\pi$ at $n=50/40/30/20/10/0$. In earlier, noisier steps (e.g., $n=50$), the output values across the space are similar, resulting in a smoother landscape and weaker guidance during sampling. As the noise level decreases in later steps (e.g., $n=10$), the value outputs vary more sharply, creating stronger guidance on the diffusion policy. Such progression enables the model to refine samples while exploring sufficiently across the space, finally yielding results that align with the target distribution.

\subsection{Results on Continuous Control Tasks}

To assess the performance of \algbb on complex continuous control problems, we utilize datasets from D4RL as our test bench and compare \algbb against a wide spectrum of offline RL algorithms. Specifically, we include 1) CQL~\citep{cql} and IQL~\citep{iql}, which are representatives using straightforward one-step policies; 2) Decision Diffuser (DD)~\citep{dd}, which uses diffusion models for planning and extracts actions from planned trajectories; 3) SfBC~\citep{sfbc}, IDQL~\citep{idql}, and QGPO~\citep{qgpo}, which use diffusion to model the behavior and further refine actions using the value functions; 4) SRPO~\citep{srpo} and DTQL~\citep{dtql}, which use diffusion to provide behavior regularization for one-step actors; 5) Diffusion-QL~\citep{dql}, which propagates the gradient of Q-value functions over the whole generation process; and 6) EDP~\citep{edp} and DAC~\citep{dac}, which accelerate the training of diffusion policies by using action approximation or using proxy objectives. The performance of each algorithm is measured by the normalized score on each task (see Appendix~\ref{appsec:intro_benchmark} for details).

The results are listed in Table~\ref{tab:d4rl}. Our findings indicate that diffusion-based methods, particularly those with diffusion-based actor and regularization (including \algbb, DAC, and Diffusion-QL), substantially outperform their non-diffusion counterparts, especially in locomotion tasks. Meanwhile, \algbb consistently achieves superior performance across nearly all datasets, underscoring the effectiveness of combining diffusion policies and the behavior-regularized RL framework. Besides the final performance, we plot the training curves of \algbb in Section~\ref{appsec:curves}. Overall, \algbb achieves fast and stable convergence except for some of the antmaze datasets, where variations may occur due to the sparse reward nature of these datasets. 

\begin{figure}[t]
    \centering
    \includegraphics[width=1.0\linewidth]{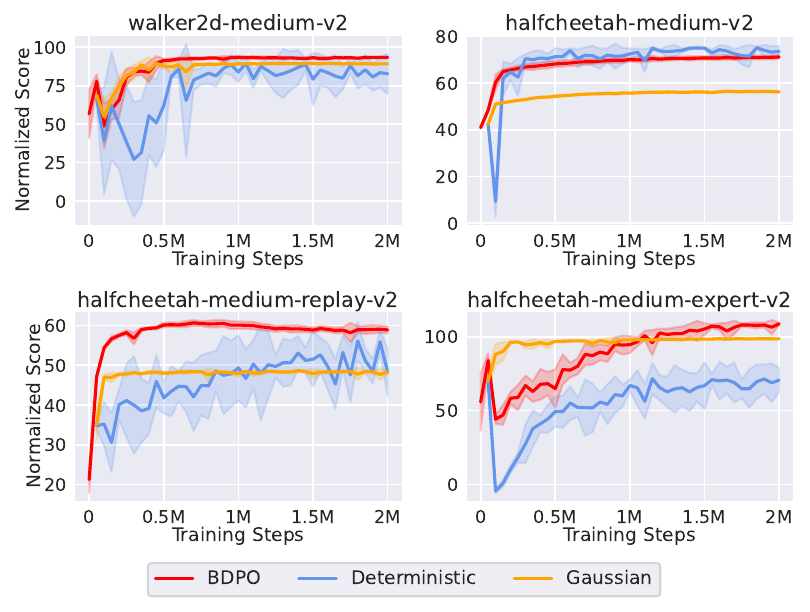}
    \caption{Comparison of different policy parameterizations. Results are taken from 10 evaluation episodes and 5 seeds. }
    \label{exp:abla_arch}
\end{figure}

\subsection{Analysis of \algbb}

\textbf{Regularization Strength $\eta$. }The hyperparameter $\eta$ controls the regularization strength: larger $\eta$ forces the diffusion policy to stay closer to the behavior diffusion, while smaller $\eta$ places more emphasis on the guidance from value networks. Figure~\ref{exp:sensitivity_eta} demonstrates this effect: in general, smaller $\eta$ does lead to better performances. However, excessively small values can result in performance degradation, as observed in \textit{walker2d-medium-v2}. In contrast to auto-tuning $\eta$ via dual gradient descend~\citep{dac,dpqe}, we find that adjustable $\eta$ causes fluctuations in penalty calculation. Therefore, we employ a fixed $\eta$ throughout training. 

\textbf{Lower Confidence Bound Coefficient $\rho$.} The hyperparameter $\rho$ determines the level of pessimism on OOD actions since the variances of Q-values on these actions are comparatively higher than in-dataset actions. In Figure~\ref{exp:sensitivity_rho}, we discover that there exists a certain range of $\rho$ where the lower confidence bound value target works best, while excessively larger or smaller values lead to either underestimation or overestimation in value estimation. 

\begin{table}[t]
    \centering
    \caption{Runtime comparison among \algbb, DAC, and Diffusion-QL (denoted as D-QL in the table), measured in minutes.}
    \label{exp:runtime}
    \begin{tabular}{c|c|ccc}
    \toprule
     Alg & Phase & $N=5$ & $N=20$ & $N=50$\\
     \midrule
     \multirow{3}{*}{\algbb} & policy & 17 & 20 & 18\\
     & value & 63& 135& 285\\
     & total & 80& 155& 303\\
     \midrule
     \multirow{3}{*}{DAC} & policy & 30& 31& 30\\
     & value & 30& 109& 251\\
     & total & 60& 140& 285\\
     \midrule
     \multirow{3}{*}{D-QL} & policy & 57& 132& 308\\
     & value & 31& 80& 252\\
     & total & 88& 212& 560\\
    \bottomrule
    \end{tabular}
\end{table}

\textbf{Policy Parameterization. }As discussed in Section~\ref{sec:intro}, improper assumptions about the action distribution may lead to inaccurate penalty calculations and ultimately degrade the overall performance. To validate this claim, we conduct a series of ablation studies that replace the actor parameterization with Gaussian distributions and Dirac distributions, respectively, while keeping other experimental configurations unchanged. For a fair comparison, we fine-tune the regularization strength $\eta$ by searching within a predefined range and report the performance corresponding to the optimal value. The results are illustrated in Figure~\ref{exp:abla_arch}, with additional details regarding the implementations provided in Appendix~\ref{appsec:abla_arch_detail}. While simpler architectures generally converge more quickly, their performance at convergence often lags behind that of \algbb, indicating a limitation in their capacity for policy parameterization. 

\textbf{Discussion on Runtime. }Table~\ref{exp:runtime} compares the runtime of \algbb, DAC, and Diffusion-QL using the \textit{walker2d-medium-v2} dataset. Our method consists of three key steps: pretraining behavior diffusion, training value functions ($Q^\pi$ and $V^{\pi,s}_n$), and optimizing the actor. The pertaining phase takes around 8 minutes and is therefore omitted in further analysis. The $Q^\pi$ training follows the same approach as Diffusion-QL and DAC, involving diffusion path sampling at the next state, while $V^{\pi,s}_n$ introduces acceptable overhead since it only requires single-step diffusion. For actor training, both DAC and \algbb use single-step diffusion, ensuring computational costs remain constant regardless of diffusion steps. In contrast, Diffusion-QL needs to back-propagate through the entire diffusion path, leading to a significantly higher runtime, especially when the number of diffusion steps increases.

\vspace{-2mm}
\section{Conclusions}\label{sec:conclusions}
Our core contribution is to extend the behavior-regularized RL framework to diffusion policies by implementing the regularization as the accumulated discrepancies of each single-step transition of the reverse diffusion. Theoretically, we demonstrate the equivalence of the pathwise KL and the commonly adopted KL constraint on action distributions. In practice, we instantiate the framework with an actor-critic style algorithm that leverages value functions on two time scales for efficient optimization. The proposed algorithm, \algbb, produces generation results that closely align with the target distribution, thereby enhancing its performance and applicability in complex continuous control problems.


\newpage
\section*{Acknowledgements}
This work is primarily supported by the National Science Foundation of China (62276126, 62250069). Chenjun Xiao gratefully acknowledges funding from the National Natural Science Foundation of China (62406271). 

\section*{Impact Statement}
This paper presents work whose goal is to advance the field of machine learning. There are many potential societal consequences of our work, none of which we feel must be specifically highlighted here.

\bibliography{main}
\bibliographystyle{icml2025}

\newpage
\appendix
\onecolumn
\section{Introduction about the Benchmarks}\label{appsec:intro_benchmark}
\textbf{Synthetic 2D Datasets. }We leverage the synthetic 2D datasets from \citet{qgpo} as a sanity check and a convenient illustration of the mechanism of \algbb. This task set encompasses 6 different datasets. For each dataset, the data points $x_i$ are associated with different energies $\mathcal{E}(x_i)$. By sampling from the Boltzmann distribution associated with the temperature $\eta$:
$$
p(x_i)\propto \exp(\mathcal{E}(x_i)/\eta), x_i\in \mathcal{D},
$$
we can obtain ground-truth sample results at a specific regularization strength $\eta$ (see Figure~\ref{appfig:2d_sample}). 

\begin{figure}[htbp]
    \centering
    \includegraphics[width=\linewidth]{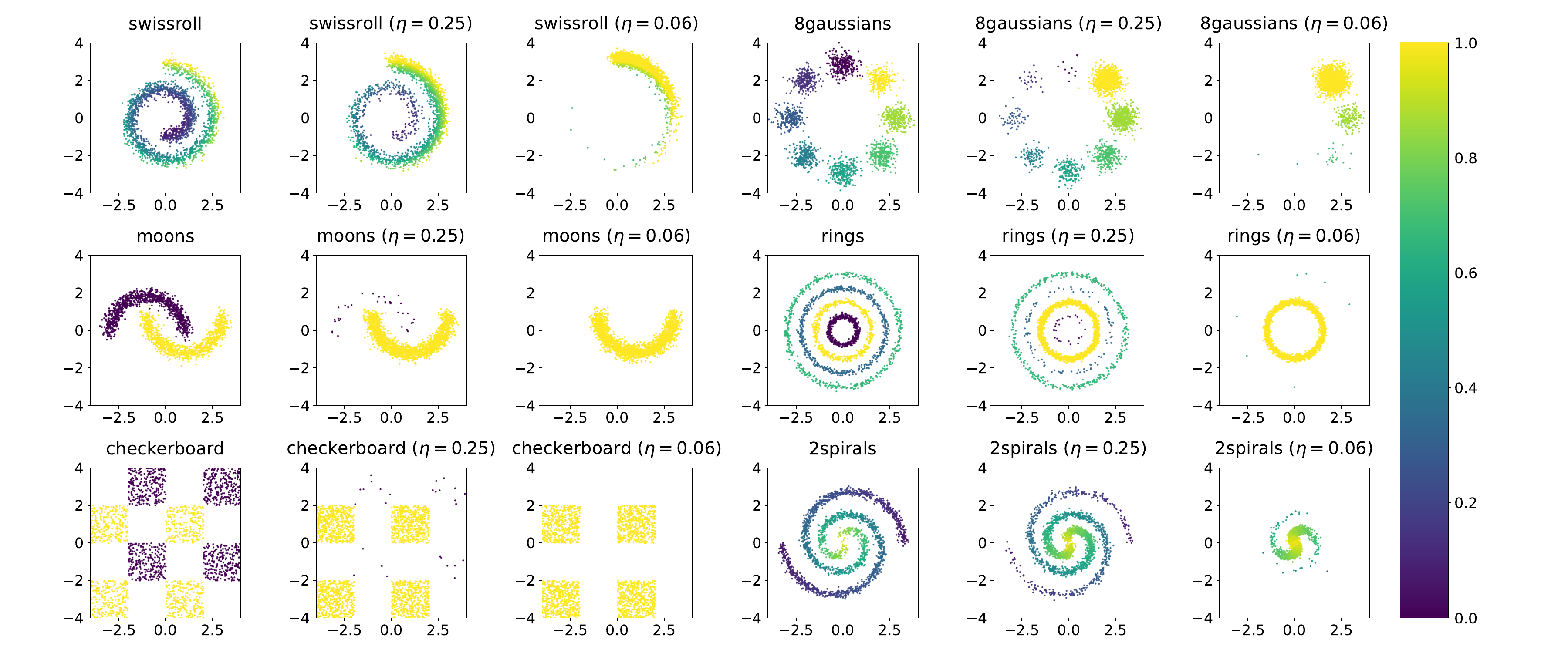}
    \caption{Illustration of synthetic 2D datasets and sampling results with various temperature $\eta$.}
    \label{appfig:2d_sample}
\end{figure}

We also use Gym MuJoCo~\citep{gym}, which provides a series of continuous control tasks to assess the performance of \algbb as well as the baseline methods.

\textbf{Locomotion Tasks. }We choose 3 tasks from Gym MuJoCo: 1) \textit{halfcheetah}, which is a robot with 9 body segments and 8 joints (including claws) as shown in Figure~\ref{subfig:benchmarks_a}. The goal is to apply torque to 6 of the joints to make the robot move forward as quickly as possible. The reward is based on forward movement distance. 2) \textit{hopper}, which is a simpler robot with a torso, a thigh, a shin, and a foot as shown in Figure~\ref{subfig:benchmarks_b}. The goal is to use torque on 3 joints to make the robot hop forward. 3) \textit{walker2d}, a robot with a torso and two legs, each consisting of a thigh, a shin, and a foot as shown in Figure~\ref{subfig:benchmarks_c}. The goal is to apply torque to 6 joints to make the robot walk forward, rather than hop.

For the offline dataset, we choose the \textit{-v2} datasets with three levels of qualities provided by D4RL~\citep{d4rl}: 1) \textit{medium}, which is collected by a policy that achieves approximately 30\% of expert-level performance; 2) \textit{medium-replay}, which includes all data from the replay buffer of the training process of the medium-level policy; and 3) \textit{medium-expert}: a mixture of medium-level data and expert-level data in a 1:1 ratio.

\textbf{Navigation Tasks. }We use \textit{antmaze} as the representative of navigation tasks. In this task set, the agent needs to control an eight degree-of-freedom \textit{ant} robot to walk through a maze, as shown in Figure~\ref{subfig:benchmarks_d}. We use three different maps: \textit{umaze}, \textit{medium}, and \textit{large}; and employed three different types of datasets: 1) the ant reaches a fixed target from a fixed starting position; 2) In the \textit{diverse} dataset, the ant needs to go from a random position to a random target; 3) In the \textit{play} dataset, the ant goes from manually selected positions to manually selected targets (not necessarily the same as during evaluation). We use the \textit{-v0} versions of the datasets, also provided by D4RL~\citep{d4rl}. 

\textbf{Metrics. }We measure the performance using the normalized score, which can be calculated by
$$
\text{Normalized\_Score}(\pi)=\frac{\text{Score}(\pi) - \text{Score}_{\text{random}}}{\text{Score}_\text{expert}-\text{Score}_{\text{random}}},
$$
where $\text{Score}(\pi)$ is the average cumulated rewards achieve by the policy $\pi$,  $\text{Score}_{\text{random}}$and $\text{Score}_{\text{expert}}$ are reference scores for random policies and expert policies, provided by D4RL.

\section{Hyper-parameter Configurations}\label{appsec:hyperparameters}

\subsection{D4RL Datasets}
We largely follow the configurations from DAC~\citep{dac}. Parameters that are common across all datasets are listed in Table~\ref{apptab:common_param}. The hyperparameters of value networks in the table are the same for both the Q-value networks $Q_{\psi}$ and diffusion value networks $V_{\phi}. $

\begin{figure}[htbp]
    \centering
    \begin{subfigure}[b]{0.22\textwidth}
        \includegraphics[width=\textwidth]{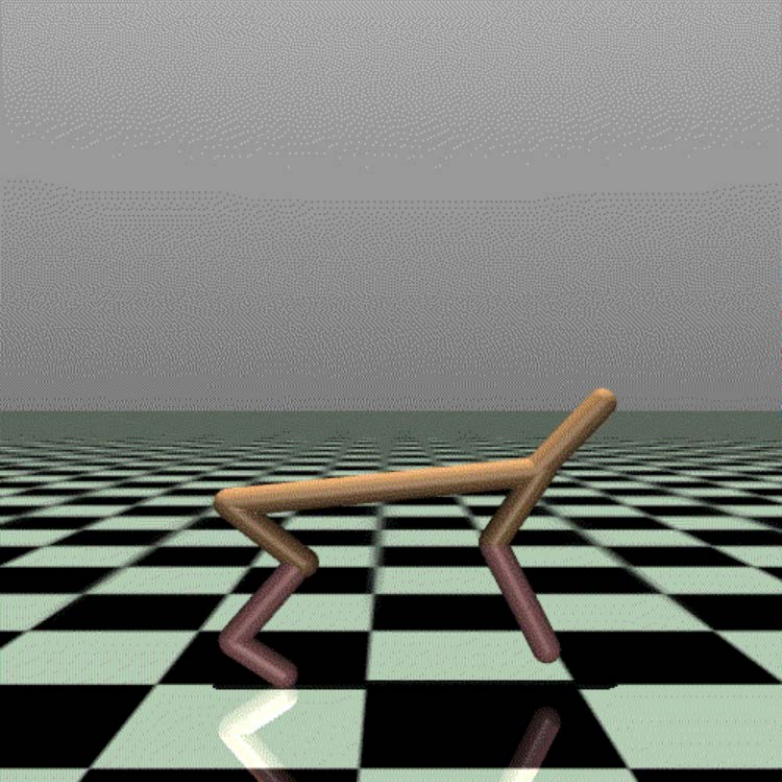}
        \caption{halfcheetah}
        \label{subfig:benchmarks_a}
    \end{subfigure}
    \begin{subfigure}[b]{0.22\textwidth}
        \includegraphics[width=\textwidth]{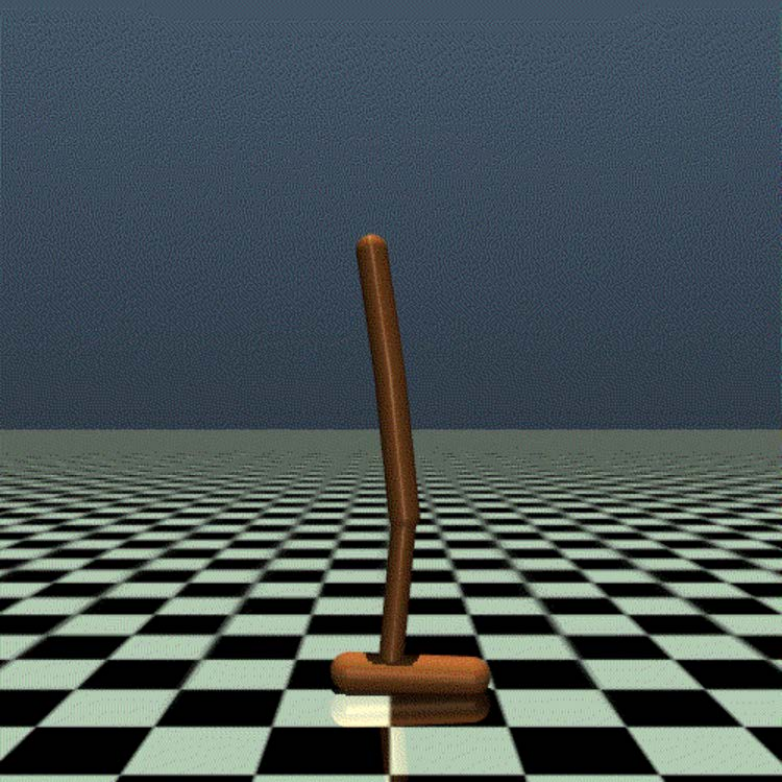}
        \caption{hopper}
        \label{subfig:benchmarks_b}
    \end{subfigure}
    \begin{subfigure}[b]{0.22\textwidth}
        \includegraphics[width=\textwidth]{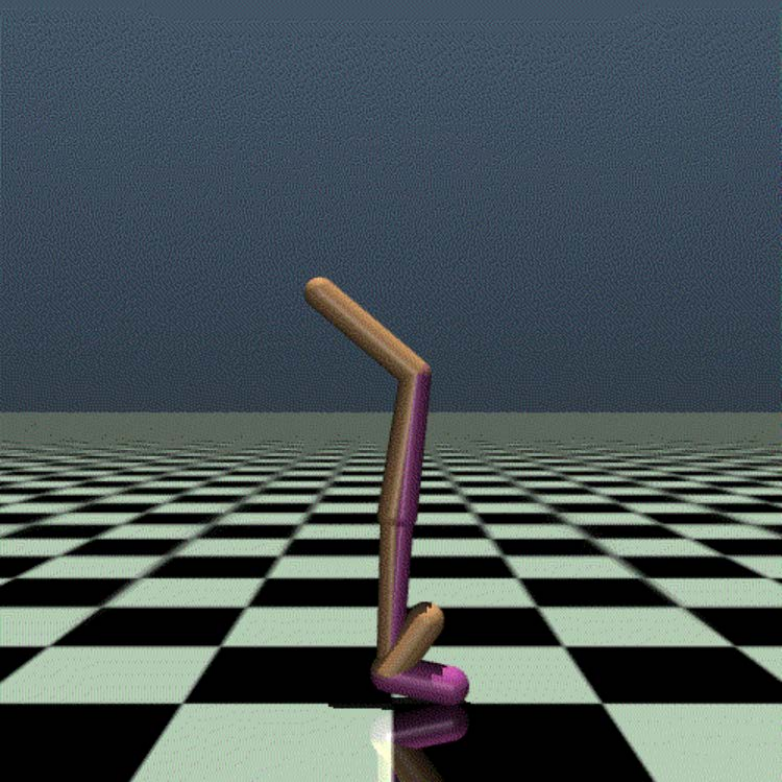}
        \caption{walker2d}
        \label{subfig:benchmarks_c}
    \end{subfigure}
    \begin{subfigure}[b]{0.22\textwidth}
        \includegraphics[width=\textwidth]{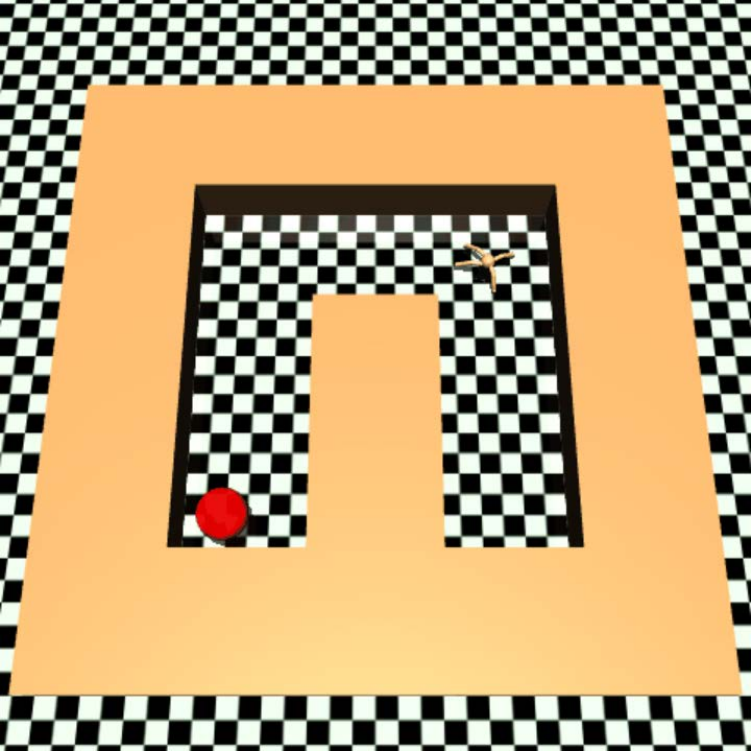}
        \caption{antmaze}
        \label{subfig:benchmarks_d}
    \end{subfigure}
    \caption{Illustration of the Locomotion and Navigation Tasks.}
    \label{fig:benchmarks}
\end{figure}
\begin{table}[htbp]
    \centering
    \caption{Common hyperparameters across all datasets.}
    \label{apptab:common_param}
    \begin{tabular}{c|c}
        \toprule
        Diffusion Backbone & \makecell[c]{\texttt{MLP}(256, 256, 256, 256) for locomotion tasks\\\texttt{MLP}(512, 512, 512, 512) for antmaze tasks}\\
        Diffusion Noise Schedule & Variance Preserving~\citep{diffusion_sde}\\
        Diffusion Solver & DDPM \\
        Diffusion Steps & 5\\
        Behavior Learning Rate & 0.0003 for locomotion tasks and 0.0001 for antmaze tasks\\
        Actor Learning Rate & 1e-5\\
        Actor Learning Rate Schedule & Cosine Annealing\\
        Actor Gradient Clip & 1.0\\
        Actor EMA Rate & 5e-3 \\
        Actor Sampling & \makecell[c]{For \textit{antmaze-umaze-diverse-v0}, generate 1 sample and execute\\For others, generate 10 samples and select the action with the highest Q} \\
        Actor Update Frequency & 5\\
        Value Learning Rate & 0.0003 \\
        Value Ensemble & 10  (20 for the \textit{hopper-medium-v2} task exclusively) \\
        Value Backbone & \texttt{MLP}(256, 256, 256)\\
        Value EMA Rate & 0.005 \\
        Discount & 0.99 for locomotion tasks and 0.995 for antmaze tasks\\
        Batch Size & 256 \\
        Optimizer & ADAM~\citep{adam} \\
        Pretrain Steps & 2e6\\
        Train Steps & 2e6\\
        Value Warmup Steps & 5e4\\
        Sample Clip Range & [-1, 1]\\
        \bottomrule
    \end{tabular}
\end{table}

\begin{table}[htbp]
    \centering
    \caption{Hyper-parameters that vary in different tasks.}
    \label{apptab:different_param}
    \begin{tabular}{c||c|c|c}
        \toprule
        Dataset & $\rho$ & $\eta$ & Max-Q Backup~\citep{dql}\\
        \midrule
        halfcheetah-medium-v2 & \multirow{3}{*}{0.5} & \multirow{3}{*}{0.05} & \multirow{3}{*}{False}\\
        halfcheetah-medium-replay-v2 &&&\\
        halfcheetah-medium-expert-v2 &&&\\
        \midrule
        hopper-medium-v2 & \multirow{3}{*}{2.0} & \multirow{3}{*}{0.2} &\multirow{3}{*}{False}\\
        hopper-medium-replay-v2 &&&\\
        hopper-medium-expert-v2 &&&\\
        \midrule
        walker2d-medium-v2 & \multirow{3}{*}{1.0} & \multirow{3}{*}{0.15} &\multirow{3}{*}{False}\\
        walker2d-medium-replay-v2 &&&\\
        walker2d-medium-expert-v2 &&&\\
        \midrule
        antmaze-umaze-v2 & \multirow{2}{*}{0.8} & \multirow{2}{*}{0.5} & \multirow{2}{*}{True} \\
        antmaze-umaze-diverse-v0 &&  & \\
        \midrule
        antmaze-medium-play-v0 & \multirow{2}{*}{0.8}& \multirow{2}{*}{0.2} & \multirow{2}{*}{True}\\
        antmaze-medium-diverse-v0 & &  & \\
        \midrule
        antmaze-large-play-v0 & \multirow{2}{*}{0.8}& \multirow{2}{*}{1.0} & \multirow{2}{*}{True} \\
        antmaze-large-diverse-v0 & &  &  \\
        \bottomrule
    \end{tabular}
\end{table}

We list the hyper-parameters that vary across different datasets in Table~\ref{apptab:different_param}. We specifically tune the regularization strength $\eta$ and the LCB coefficient for locomotion tasks, plus the value backup mode for antmaze tasks. 

For evaluation, at each timestep $s_t$, we use the diffusion policy to diffuse $N=10$ actions, and select the action with the highest Q-value as the final action $a_t$ to execute in the environment. We report the score normalized by the performances of random policies and expert policies, provided by D4RL~\citep{d4rl}. 

\subsection{Synthetic 2D Datasets}

In the experiments with Synthetic 2D Datasets, we basically followed the parameters in D4RL in Table~\ref{apptab:common_param}, except that $\rho$ is set to 0, $\eta$ is set to 0.06, batch size is set to 512, and the number of diffusion steps is set to 50. We also turn off sample clipping for BDPO since the actions in these tasks are within $[-4.5, 4.5]$.

\subsection{Details about the Ablation on Policy Parameterization}\label{appsec:abla_arch_detail}
In this section, we provide more details about how we implement the variants that use deterministic policies or Gaussian policies. The only difference between these variants and \algbb lines in the actor parameterization, which means that we keep the implementation of the value networks (e.g., ensemble tricks and LCB value target) and the hyperparameters (e.g., the LCB coefficient $\rho$) identical to \algbb. We inherit the general behavior-regularized RL framework defined in \eqref{eq:brl_obj}, where we update the soft Q-value functions using the target
\begin{equation}
    \begin{aligned}
        \mathcal{B}^\pi Q^\pi(s, a)=R(s, a)+\gamma \mathbb{E}\left[Q^\pi(s', a')\right]-\eta D(\pi, \nu),
    \end{aligned}
\end{equation}
and improve the policy $\pi$ via
\begin{equation}
    \max_\pi\ \gamma \mathbb{E}_{a\sim\pi}\left[Q^\pi(s, a)\right]-\eta D(\pi, \nu),
\end{equation}
where $D$ denotes certain types of regularization specific to the actor parameterization.

For deterministic policies, we follow \citet{rebrac} and use the mean squared error between actions from the policy and the behavior policy:
\begin{equation}
    D(\pi,\nu) = \frac 12\|\pi(s)-\nu(s)\|^2. 
\end{equation}
For Gaussian policies, we parameterize it as Diagonal Gaussian distribution $\pi(\cdot|s)=\mathcal{N}(\mu_{\pi}(s), \sigma_{\pi}^2(s))$, and the regularization is implemented as the KL divergence:
\begin{equation}
    \begin{aligned}
        D(\pi(s),\nu(s))=\KL{\pi(s)}{\nu(s)}=\log \frac{\sigma_\nu(s)}{\sigma_{\pi}(s)} + \frac{\sigma_\pi^2(s)+(\mu_\pi(s)-\mu_\nu(s))^2}{2\sigma_\nu^2(s)}-\frac 12. 
    \end{aligned}
\end{equation}
We find that using the analytical KL computation for Gaussian policies works better than using sample-based estimations.

To ensure fair comparisons, we search for the optimal $\eta$ for both deterministic policies and Gaussian policies within a certain range. Specifically, we choose the $\eta$ from $\{0.1, 0.25, 0.5, 0.75, 1, 2, 3, 5, 10\}$. We observed that a higher eta would lead to a decline in performance, but an excessively small eta would cause the performance to oscillate drastically or even collapse in some cases. Therefore, we select values that ensure stable performance and deliver the highest level of performance as shown in Table~\ref{apptab:arch_abla_param}. 
\begin{table}[htbp]
    \centering
    \caption{Values of $\eta$ used in the ablation study with the policy parameterization.}
    \label{apptab:arch_abla_param}
    \begin{tabular}{c|c|c}
        \toprule
        Dataset & Deterministic Policy & Gaussian Policy\\
        \midrule
        halfcheetah tasks & 0.25 & 0.75\\
        \midrule
        hopper tasks & 1.0 & 0.1 \\
        \midrule
        walker2d tasks & 1.5 & 0.25\\
        \bottomrule
    \end{tabular}
\end{table}

\section{Proof for the Main Results}\label{appsec:proof}

\begin{theorem}
\label{thm:optim_is_reverse_process}
    Suppose that the behavior policy is a diffusion model that approximates the forward process $q$ by minimizing the objective
    \begin{equation}
    \label{eq:diffusion_elbo}
        \mathbb{E}_{a^0\sim \nu(s), a^n\sim q_{n|0}(\cdot|a^0)}\left[\KL{q_{n-1|0,n}(\cdot|a^0,a^n)}{p^{\nu,s}_{n-1|n}(\cdot|a^n)}\right] \quad \forall n\in\{1, 2, \ldots, N\}, s\in\mathcal{S}.
    \end{equation}
    Then, for any given Q-value function $Q(s, a)$, the optimal solution $p^{*,s}$ of \eqref{problem:path_diff} is the reverse process corresponding to the forward process $q^{\pi_Q,s}_{0:N}=\pi_Q(a^0|s)\prod_{n=1}^Nq_{n|n-1}(a^n|a^{n-1})$, where $\pi_Q(a|s)\propto\nu(a|s)\exp(Q(s,a)/\eta)$.
\end{theorem}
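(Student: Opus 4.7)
My approach is to recognize the objective of \eqref{problem:path_diff} as a standard KL-regularized variational problem over path distributions, solve it in closed form, and then unpack the resulting optimizer via the diffusion ELBO assumption to identify it with the claimed reverse process. Concretely, I would proceed in four steps.

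\textbf{Step 1 (Rewrite the objective).} Using the pathwise KL decomposition \eqref{eq:pathwise_kl}, the sum $\sum_{n=1}^N \mathbb{E}_{p^{\pi,s}_{0:N}}[\ell^{\pi,s}_n(a^n)]$ equals $\KL{p^{\pi,s}_{0:N}}{p^{\nu,s}_{0:N}}$. Since $Q(s,a^0)$ depends only on the marginal $a^0$ of the path, the problem \eqref{problem:path_diff} becomes
\[
\max_{p^{\pi,s}_{0:N}}\ \mathbb{E}_{a^{0:N}\sim p^{\pi,s}_{0:N}}[Q(s,a^0)] \;-\; \eta\,\KL{p^{\pi,s}_{0:N}}{p^{\nu,s}_{0:N}},
\]
which is a standard entropy-regularized linear program over the probability simplex on paths.

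\textbf{Step 2 (Closed-form optimizer on paths).} The maximizer of $\mathbb{E}_p[f] - \eta \KL{p}{q}$ is the Gibbs distribution $p^*(x) \propto q(x)\exp(f(x)/\eta)$, obtained by a Lagrangian/pointwise variational argument. Applying this to our setting yields
\[
p^{*,s}_{0:N}(a^{0:N}) \;=\; \frac{1}{Z(s)}\, p^{\nu,s}_{0:N}(a^{0:N})\,\exp\!\bigl(Q(s,a^0)/\eta\bigr),
\]
with $Z(s)=\int p^{\nu,s}_{0:N}(a^{0:N})\exp(Q(s,a^0)/\eta)\,\mathrm{d}a^{0:N}=\int \nu(a^0|s)\exp(Q(s,a^0)/\eta)\,\mathrm{d}a^0$.

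\textbf{Step 3 (Identify $p^{\nu,s}_{0:N}$ with the behavior's forward joint).} The ELBO hypothesis \eqref{eq:diffusion_elbo} together with the Bayes identity $q_{n-1|n,0}(a^{n-1}|a^0,a^n)\,q_{n|0}(a^n|a^0)\,\nu(a^0|s) = \nu(a^0|s)\,q_{n-1,n|0}(a^{n-1},a^n|a^0)$ implies that in the idealized minimum, $p^{\nu,s}_{n-1|n}(\cdot|a^n)$ equals the true posterior $q^{\nu,s}_{n-1|n}(\cdot|a^n)$ of the forward process $q^{\nu,s}_{0:N} = \nu(a^0|s)\prod_{n=1}^N q_{n|n-1}(a^n|a^{n-1})$. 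Combined with $p^\theta_N = q_N = \mathcal{N}(0,I)$, this yields the joint identity $p^{\nu,s}_{0:N}(a^{0:N}) = \nu(a^0|s)\prod_{n=1}^N q_{n|n-1}(a^n|a^{n-1})$.

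\textbf{Step 4 (Conclude).} Substituting this identity into the optimizer from Step 2 and absorbing $\exp(Q(s,a^0)/\eta)$ into the factor depending only on $a^0$ gives
\[
p^{*,s}_{0:N}(a^{0:N}) = \underbrace{\frac{\nu(a^0|s)\exp(Q(s,a^0)/\eta)}{Z(s)}}_{=\,\pi_Q(a^0|s)} \prod_{n=1}^N q_{n|n-1}(a^n|a^{n-1}) \;=\; q^{\pi_Q,s}_{0:N}(a^{0:N}),
\]
and therefore $p^{*,s}$ is exactly the reverse process corresponding to $q^{\pi_Q,s}$, as claimed.

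\textbf{Main obstacle.} The mathematical content of Steps 1, 2, and 4 is routine; the delicate point is Step 3, which requires the ELBO assumption to be interpreted in its idealized form so that the variational lower bound is tight and $p^{\nu,s}_{0:N}$ can be rewritten as the behavior's forward joint. I would make this precise by noting that minimizing \eqref{eq:diffusion_elbo} is equivalent to minimizing $\KL{q^{\nu,s}_{0:N}}{p^{\nu,s}_{0:N}}$ up to terms independent of $\theta$, so at the global optimum the two path measures coincide, which is the statement needed to carry Step 4 through.
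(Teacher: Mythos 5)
Your proof is correct, and it takes a genuinely more direct route than the paper's for the main optimization step. The paper first establishes, via a Lagrangian stationarity argument on each per-step objective, that minimizing \eqref{eq:diffusion_elbo} forces $p^{\nu,s}_{n-1|n}=q^{\nu,s}_{n-1|n}$; it then treats \eqref{problem:path_diff} as a KL-regularized MDP over the reverse diffusion steps, derives the optimal diffusion value functions through a backward log-sum-exp recursion, unrolls that recursion to obtain $V^{*,s}_n(a^n)=\eta\log\mathbb{E}_{a^0\sim p^{\nu,s}_{0|n}}\left[\exp(Q(s,a^0)/\eta)\right]$, and finally performs several Bayes-rule manipulations to identify the resulting Boltzmann-tilted kernel with $q^{\pi_Q,s}_{n-1|n}$. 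You replace that entire dynamic-programming pass with a single application of the Gibbs variational principle on path space, which immediately gives $p^{*,s}_{0:N}\propto p^{\nu,s}_{0:N}\exp(Q(s,a^0)/\eta)$ and reduces the theorem to the factorization identity of your Step 3; your justification of Step 3 via the path-KL form of the ELBO is also a valid (and slicker) substitute for the paper's Lagrangian computation. What your route gives up is the explicit closed form of the intermediate value functions $V^{*,s}_n$, which the paper's recursion produces as a by-product and which clarifies what the learned diffusion value functions should converge to. Two points worth tightening, neither of which is a gap beyond what the paper itself elides: (i) the identity $p^{\nu,s}_{0:N}=q^{\nu,s}_{0:N}$ in Step 3 also uses the standard approximation that $q_N$ matches the fixed prior $\mathcal{N}(0,I)$; (ii) since the reverse parameterization fixes $p^{\pi,s}_N=\mathcal{N}(0,I)$, the unconstrained Gibbs maximizer (whose step-$N$ marginal is $q^{\pi_Q,s}_N$) is not literally feasible, so the clean statement is that only the conditional law $p^{\pi,s}_{0:N-1|N}$ is free and its optimizer is the conditional of the Gibbs measure, i.e.\ exactly the reverse kernels $q^{\pi_Q,s}_{n-1|n}$ --- which is the pointwise conclusion the paper's backward induction delivers directly.
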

\begin{proof}
Let's first consider the behavior diffusion policy $p^\nu$, which minimizes the following problem for every $n$ and state $s$:
$$
\begin{aligned}
        &\min_{p^{\nu,s}_{n-1|n}}\quad \mathbb{E}_{a^0\sim\nu(\cdot|s)}\left[q_{n|0}(a^n|a^0)\KL{q_{n-1|0,n}(\cdot|a^0, a^n)}{p_{n-1|n}^{\nu,s}(\cdot|a^n)}\right]\\
        &\ \ \textrm{s.t.}\quad \int p_{n-1|n}^{\nu,s}(a^{n-1}|a^n)\rmd a^{n-1}=1.
    \end{aligned}
$$
The corresponding Lagrange function is
\begin{equation}
\begin{aligned}
    L(p^{\nu,s}_{n-1|n}, \lambda)&=\mathbb{E}_{a^0\sim\nu(\cdot|s)}\left[q_{n|0}(a^n|a^0)\KL{q_{n-1|0,n}(\cdot|a^0, a^n)}{p_{n-1|n}^{\nu,s}(\cdot|a^n)}\right]+\lambda (1-\int p_{n-1|n}^{\nu,s}(a^{n-1}|a^n)\mathrm{d} a^{n-1}),\\
\end{aligned}
\end{equation}
where $\lambda$ is the multiplier. Setting $\frac{\partial L}{\partial p_{n|n-1}^{\nu, s}}=0$, we obtain
\begin{equation}\label{appeq:dual_optimal}
\begin{aligned}
    p^{\nu,s}_{n-1|n}(a^{n-1}|a^n)&=-\frac 1 \lambda \mathbb{E}_{a^0\sim \nu^0(\cdot|s)}\left[q_{n|0}(a^n|a^0)q_{n-1|0,n}(a^{n-1}|a^0, a^n)\right].
\end{aligned}
\end{equation}
Since $\int p_{n-1|n}^{\nu,s}(a^{n-1}|a^n)\rmd a^{n-1}=1$, it follows that
\begin{equation}\label{appeq:lambda}
\begin{aligned}
    \lambda &= -\int \mathbb{E}_{a^0\sim \nu(\cdot|s)}\left[q_{n|0}(a^n|a^0)q_{n-1|0,n}(a^{n-1}|a^0, a^n)\right]\rmd a^{n-1}\\
    &=-\mathbb{E}_{a^0\sim \nu(\cdot|s)}\left[q_{n|0}(a^n|a^0)\right].
\end{aligned}
\end{equation}
Substituting \eqref{appeq:lambda} into \eqref{appeq:dual_optimal},
\begin{equation}
    \begin{aligned}
        p^{\nu,s}_{n-1|n}(a^{n-1}|a^n)&=\frac{\mathbb{E}_{a^0\sim \nu(\cdot|s)}\left[q_{n|0}(a^n|a^0)q_{n-1|0,n}(a^{n-1}|a^0, a^n)\right]}{\mathbb{E}_{a^0\sim \nu^0(\cdot|s)}\left[q_{n|0}(a^n|a^0)\right]}\\
        &=\frac{\mathbb{E}_{a^0\sim \nu(\cdot|s)}\left[q_{n|0}(a^n|a^0)q_{n-1|0,n}(a^{n-1}|a^0, a^n)\right]}{q_n(a^n)}\\
        &=\int \frac{v(a^0)q_{n|0}(a^n|a^0)q_{n-1|0, n}(a^{n-1}|a^0, a^n)}{q_n(a^n)}\mathrm{d} a^0\\
        &=q_{n-1|n}^{\nu, s}(a^{n-1}|a^n),
    \end{aligned}
\end{equation}
That is, through optimizing \eqref{eq:diffusion_elbo}, the derived behavior diffusion policy is consistent with $q_{n-1|n}^{\nu, s}$ from the forward process. 

Next, we prove that the optimizer of \eqref{problem:path_diff} also satisfies $p^{*, s}_{n-1|n}=q^{\pi_Q, s}_{n-1|n}$. 

To do this, we first recognize that the reverse (generative) process of the diffusion policy constitutes a Markov Decision Process (MDP), where the \textit{state} at time step $n$ is the tuple $(s, a^n)$ and the agent's policy follows $a^{n-1}\sim p^{\pi, s}_{n-1|n}(\cdot|a^n)$. The \textit{transition} in this MDP is implicit and deterministic: upon selecting the action $a^{n-1}$, the state immediately transitions from $(s, a^n)$ to $(s, a^{n-1})$. At the last time step $n=0$, the agent will receive an ending reward of $Q(s, a^0)$. In this sense, the problem defined in \eqref{problem:path_diff} can be recognized as a behavior-regularized RL problem with Gaussian parameterized policy $p^{\pi,s}_{n-1|n}$ and behavior policy $p^{\nu,s}_{n-1|n}$. Following \citet{brac}, the optimal diffusion value functions satisfy the recursion:
\begin{equation}\label{appeq:diffusion_value_recursion}
        \begin{aligned}
        V_0^{*, s}(a^0)&=Q(s, a^0),\\
        V_n^{*, s}(a^n)&=\eta\log\mathbb{E}_{a^{n-1}\sim p^{\nu,s}_{n-1|n}}\left[\exp(V^{*, s}_{n-1}(a^{n-1})/\eta)\right],
    \end{aligned}
\end{equation}
and the optimal diffusion policy is given by:
\begin{equation}\label{appeq:optimal_diffusion_policy}
    p^{*, s}_{n-1|n}(a^{n-1}|a^n)=\frac{p^{\nu,s}_{n-1|n}(a^{n-1}|a^n)\exp(V^{*, s}_{n-1}(a^{n-1})/\eta)}{\mathbb{E}_{a^{n-1}\sim p^{\nu,s}_{n-1|n}}\left[\exp(V^{*, s}_{n-1}(a^{n-1})/\eta)\right]}.
\end{equation}
Expanding the recursion of diffusion value functions, we have
\begin{equation}
    \begin{aligned}
    V^{*, s}_n(a^n)&=\eta\log\mathbb{E}_{a^{n-1}\sim p^{\nu,s}_{n-1|n}}\left[\exp(V^{*, s}_{n-1}(a^{n-1})/\eta)\right]\\
    &=\eta\log\mathbb{E}_{a^{n-1}\sim p^{\nu,s}_{n-1|n},a^{n-2}\sim p^{\nu,s}_{n-2|n-1}}\left[\exp(V^{*, s}_{n-2}(a^{n-2})/\eta)\right]\\
    &=\ldots\\
    &=\eta\log\mathbb{E}_{a^{0}\sim p^{\nu,s}_{0|n}}\left[\exp(V^{*, s}_{0}(a^{0})/\eta)\right]\\
    &=\eta\log\mathbb{E}_{a^{0}\sim p^{\nu,s}_{0|n}}\left[\exp(Q(s, a^{0})/\eta)\right].\\
    \end{aligned}
\end{equation}
Similarly, the optimal policy can be expressed as: 
\begin{equation}\label{appeq:expanded_optimal_diffusion_policy}
    \begin{aligned}
        p^{*, s}_{n-1|n}(a^{n-1}|a^n)&=\frac{p^{\nu,s}_{n-1|n}(a^{n-1}|a^n)\mathbb{E}_{a^0\sim p^{\nu,s}_{0|n-1}}\left[\exp(Q(s, a^0)/\eta)\right]}{\mathbb{E}_{a^{n-1}\sim p^{\nu,s}_{n-1|n}}\left[\mathbb{E}_{a^0\sim p^{\nu,s}_{0|n-1}}\left[\exp(Q(s, a^0)/\eta)\right]\right]}.\\
    \end{aligned}
\end{equation}
Let's consider the following expression:
\begin{equation}\label{appeq:expression}
    \begin{aligned}
    \mathbb{E}_{a^0\sim p^{\nu,s}_{0|n}}\left[\exp(Q(s, a^0)/\eta)\right]&=\int p^{\nu,s}_{0|n}(a^0|a^n)\exp(Q(s, a^0)/\eta)\mathrm{d} a^0\\
    &=\int q^{\nu,s}_{0|n}(a^0|a^n)\exp(Q(s, a^0)/\eta)\mathrm{d} a^0\\
    &=\int \frac{q^{\nu,s}_{n|0}(a^n|a^0)\nu(a^0|s)}{q^{\nu,s}_n(a^n)}\exp(Q(s, a^0)/\eta)\mathrm{d} a^0\\
    &=\frac 1{Z(s)}\int \frac{q^{\nu,s}_{n|0}(a^n|a^0)\pi_Q(a^0|s)}{q^{\nu,s}_n(a^n)}\mathrm{d} a^0\\
    &=\frac{1}{Z(s)q^{\nu,s}_n(a^n)}\mathbb{E}_{a^0\sim\pi_Q}\left[q^{\nu,s}_{n|0}(a^n|a^0)\right],\\
    \end{aligned}
\end{equation}
where $Z(s)$ is the normalizing factor, and the second equation follows from $p^{\nu,s}_{n-1|n}=q^{\nu,s}_{n-1|n}$. Substituting $\mathbb{E}_{a^0\sim p^{\nu,s}_{0|n}}\left[\exp(Q(s, a^0)/\eta)\right]$ and $\mathbb{E}_{a^0\sim p^{\nu,s}_{0|n-1}}\left[\exp(Q(s, a^0)/\eta)\right]$ with \eqref{appeq:expression} into \eqref{appeq:expanded_optimal_diffusion_policy}, 
\begin{equation}
    \begin{aligned}
        p^{*, s}_{n-1|n}(a^{n-1}|a^n)&=\frac{p^{\nu,s}_{n-1|n}(a^{n-1}|a^n)q^{\nu,s}_n(a^n)\mathbb{E}_{a^0\sim\pi_Q}\left[q^{\nu,s}_{n-1|0}(a^{n-1}|a^0)\right]}{q^{\nu,s}_{n-1}(a^{n-1})\mathbb{E}_{a^0\sim\pi_Q}\left[q^{\nu,s}_{n|0}(a^n|a^0)\right]}\\
        &=\frac{q^{\nu,s}_{n-1|n}(a^{n-1}|a^n)q^{\nu,s}_n(a^n)\mathbb{E}_{a^0\sim\pi_Q}\left[q^{\nu,s}_{n-1|0}(a^{n-1}|a^0)\right]}{q^{\nu,s}_{n-1}(a^{n-1})\mathbb{E}_{a^0\sim\pi_Q}\left[q^{\nu,s}_{n|0}(a^n|a^0)\right]}\\
        &=\frac{q^{\nu,s}_{n|n-1}(a^n|a^{n-1})\mathbb{E}_{a^0\sim\pi_Q}\left[q^{\nu,s}_{n-1|0}(a^{n-1}|a^0)\right]}{\mathbb{E}_{a^0\sim\pi_Q}\left[q^{\nu,s}_{n|0}(a^n|a^0)\right]}\\
        &=\frac{q^{\pi_Q,s}_{n|n-1}(a^n|a^{n-1})\mathbb{E}_{a^0\sim\pi_Q}\left[q^{\pi_Q,s}_{n-1|0}(a^{n-1}|a^0)\right]}{\mathbb{E}_{a^0\sim\pi_Q}\left[q^{\pi_Q,s}_{n|0}(a^n|a^0)\right]}\\
        &=q^{\pi_Q, s}_{n-1|n}(a^{n-1}|a^n),
    \end{aligned}
\end{equation}
where the second line follows from $p^{\nu,s}_{n-1|n}=q^{\nu,s}_{n-1|n}$ and the fourth line follows from the fact that $q^{\nu,s}_{i|j}=q^{\pi_Q,s}_{i|j}$ for any $i>j$. 

In conclusion, we obtain $p^{*, s}_{n-1|n}(a^{n-1}|a^n)=q^{\pi_Q, s}_{n-1|n}(a^{n-1}|a^n)$, meaning that the reverse process $p^{*,s}$ exactly corresponds to the forward process starting from $\pi_Q(a|s)\propto \nu(a|s)\exp(Q(s, a)/\eta)$.

\end{proof}

\begin{theorem}[Theorem 4.2 in the main text]\label{thm:pathwise_kl_equivalence}
    Let $p^\nu$ be the behavior diffusion process. The optimal diffusion policy $p^{*}$ of the pathwise KL-regularized RL problem in \eqref{eq:pathwise_kl_obj} is also the optimal policy $\pi^*$ of the KL regularized objective in \eqref{eq:brl_obj}, in the sense that $\pi^*(a|s)=\int p^{*, s}_{0:N}(a^{0:N})\delta(a-a^0)\mathrm{d}a^{0:N} \ \forall s\in\mathcal{S}$, where $\delta$ is the Dirac delta function. 
\end{theorem}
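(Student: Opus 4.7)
The plan is to reduce the pathwise KL-regularized problem to the standard KL-regularized one by combining Theorem~\ref{thm:optim_is_reverse_process} (proved just above) with a dynamic-programming argument on the outer environment MDP. Theorem~\ref{thm:optim_is_reverse_process} already characterizes, for any fixed critic $Q$, the pathwise-optimal one-step diffusion policy as the reverse process of the Boltzmann distribution $\pi_Q(a|s)\propto \nu(a|s)\exp(Q(s,a)/\eta)$, so what remains is to show that the value-function fixed point of the pathwise problem coincides with that of the standard soft Bellman operator.

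First I would unfold \eqref{eq:pathwise_kl_obj} as a Bellman recursion in the outer MDP. At each successor state $s'$, the inner maximization takes the form
\[
\max_{p^{\pi,s'}}\; \EE_{a'^{0:N}\sim p^{\pi,s'}}\!\left[Q(s',a'^0) - \eta\sum_{n=1}^N \ell_n^{\pi,s'}(a'^n)\right],
\]
which by the decomposition in \eqref{eq:pathwise_kl} equals $\EE_{a'^0\sim \pi}[Q(s',a'^0)] - \eta\,\KL{p^{\pi,s'}_{0:N}}{p^{\nu,s'}_{0:N}}$. Theorem~\ref{thm:optim_is_reverse_process} identifies the maximizer as the reverse process of the forward diffusion seeded at $\pi_Q$, and a short computation using the softmax identity shows the corresponding optimal value is $\eta\log\EE_{a\sim\nu(\cdot|s')}[\exp(Q(s',a)/\eta)]$, which is precisely the one-step update of the standard soft Bellman operator in \eqref{eq:expected_bellman}. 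Hence the two soft Bellman operators agree on $Q$, share the same unique fixed point $Q^*$, and yield the same optimal action-marginal $\pi^*(a|s)\propto \nu(a|s)\exp(Q^*(s,a)/\eta)$.

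The main obstacle is arguing that \emph{every} pathwise optimum $p^*$ has the asserted marginal, not merely the specific reverse process produced by Theorem~\ref{thm:optim_is_reverse_process}. To close this gap I would apply the chain-rule (data-processing) inequality
\[
\KL{p^{\pi,s}_{0:N}}{p^{\nu,s}_{0:N}} \;\geq\; \KL{\pi(\cdot|s)}{\nu(\cdot|s)},
\]
with equality iff the conditional reverse law $p^{\pi,s}_{1:N|0}$ coincides with $p^{\nu,s}_{1:N|0}$ almost surely. This bound shows the pathwise objective is dominated by the standard objective on matching marginals, and is saturated by the reverse process of Theorem~\ref{thm:optim_is_reverse_process}. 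Combining this saturation with strict concavity of the standard KL-regularized objective in $\pi$ (which forces the marginal optimizer of \eqref{eq:brl_obj} to be unique) yields the desired squeeze: any pathwise optimizer must have marginal $\pi^*$, completing the proof.
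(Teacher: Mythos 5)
Your proposal is correct and follows essentially the same route as the paper's proof: both invoke Theorem~\ref{thm:optim_is_reverse_process} to identify the pathwise optimizer as the reverse process seeded at the Boltzmann distribution $\pi_Q$, use the chain rule for KL on the shared forward transition kernel to collapse the pathwise KL to the marginal KL $\KL{\pi_Q(\cdot|s)}{\nu(\cdot|s)}$, and then match the resulting log-sum-exp soft Bellman recursion with that of \eqref{eq:brl_obj}. Your additional data-processing-inequality argument for why \emph{every} pathwise optimizer must have the asserted marginal is a small but legitimate strengthening of a uniqueness point the paper leaves implicit.
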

\begin{proof}
    Denote the optimal Q-value function of the pathwise KL-regularized RL as $Q^*$. According to Theorem~\ref{thm:optim_is_reverse_process}, for any state $s\in\mathcal{S}$, $p^{*,s}$ is the reverse process of the forward diffusion process $q^{\pi_{Q^*},s}$. Since the $p^{\nu,s}$ is also the reverse process of $q^{\nu,s}$, we have
    \begin{equation}
        \KL{p^{*,s}_{0:N}}{p^{\nu,s}_{0:N}}=\KL{q^{\pi_{Q^*},s}_{0:N}}{q^{\nu,s}_{0:N}}.
    \end{equation}
    Notice that $q^{\pi_{Q^*},s}_{0:N}$ and $q^{\nu,s}_{0:N}$ share the same transition kernel and differ only in the initial distribution. Hence, 
    \begin{equation}
    \begin{aligned}
    \KL{p^{*, s}_{0:N}}{p^{\nu,s}_{0:N}}&=\KL{q^{\pi_{Q^*}, s}_{0:N}}{q^{\nu,s}_{0:N}}\\
    &=\KL{q^{\pi_{Q^*},s}_{0}}{q^{\nu,s}_{0}}+\KL{q^{\pi_{Q^*},s}_{1:N|0}}{q^{\nu,s}_{1:N|0}}\\
    &=\KL{q^{\pi_{Q^*},s}_{0}}{q^{\nu,s}_{0}}\\
    &=\KL{\pi_{Q^*}(\cdot|s)}{\nu(\cdot|s)}.
    \end{aligned}
    \end{equation}
    Therefore, the optimal Q-value function $Q^*(s, a)$ satisfies
    \begin{equation}
    \begin{aligned}
            Q^*(s, a)&=R(s, a)+\gamma\mathbb{E}_{a'^{0:N}\sim p^{\*,s'}_{0:N}}\left[Q^*(s', a'0)-\eta\KL{p^{*, s'}_{0:N}}{p^{\nu,s'}_{0:N}}\right]\\
            &=R(s, a)+\gamma\mathbb{E}_{a'\sim \pi_{Q^*}}\left[Q^*(s', a')-\eta\KL{\pi_{Q^*}(\cdot|s')}{\nu(\cdot|s')}\right]\\
            &=R(s, a)+\gamma\eta\log\mathbb{E}_{a'\sim \nu(\cdot|s')}\left[\exp(Q^*(s',a')/\eta)\right],
    \end{aligned}
    \end{equation}
    which is exactly the optimal Bellman iteration of the problem defined in \eqref{eq:brl_obj}. This means that $Q^*$ is also the optimal Q-value function of the problem defined in \eqref{eq:brl_obj}, and $\pi_{Q^*}$ is correspondingly the optimal solution $\pi^*$. Using Theorem~\ref{thm:optim_is_reverse_process}, we know that $p^{*,s}$ and $\pi^*$ are equivalent:
    \begin{equation}
        \begin{aligned}
            \pi^*(a|s)=\int p^{*, s}_{0:N}(a^{0:N})\delta(a-a^0)\mathrm{d} a^{0:N}\quad \forall s\in \mathcal{S}. 
        \end{aligned}
    \end{equation}
    The proof is completed. 
\end{proof}  

\begin{lemma}[Soft Policy Evaluation~(adapted from \citet{sac})]
\label{lemma:soft_policy_eval}
Consider the soft Bellman backup operator $\mathcal{B}^\pi$ in \eqref{eq:upper_critic} and a function $Q^0:\mathcal{S}\times\mathcal{A}\to\mathbb{R}$. Define $Q^{k+1}=\mathcal{B}^\pi Q^{k}$. Then the sequence $Q^k$ will converge to the soft Q-value of $\pi$ as $k\to\infty$ under Assumption~\ref{ass}.

\end{lemma}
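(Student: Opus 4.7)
The plan is to adapt the standard soft policy evaluation argument (as in \citet{sac}) to our pathwise KL setting by absorbing the accumulated diffusion-step penalties into a modified reward, and then invoking a contraction argument.

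First, I would define the \emph{augmented reward}
\[
\tilde R^\pi(s,a)\;=\;R(s,a)\;-\;\gamma\eta\,\mathbb{E}_{s'\sim T(\cdot|s,a),\,a'^{0:N}\sim p^{\pi,s'}_{0:N}}\!\left[\sum_{n=1}^N \ell^{\pi,s'}_{n}(a'^n)\right],
\]
so that the operator in \eqref{eq:upper_critic} can be rewritten compactly as
\[
\mathcal{B}^\pi Q(s,a) \;=\; \tilde R^\pi(s,a) + \gamma\,\mathbb{E}_{s'\sim T(\cdot|s,a),\,a'^{0:N}\sim p^{\pi,s'}_{0:N}}\!\left[Q(s',a'^0)\right].
\]
The first key step is to verify that $\tilde R^\pi$ is uniformly bounded. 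By the pathwise decomposition in \eqref{eq:pathwise_kl}, $\mathbb{E}[\sum_{n}\ell_n^{\pi,s'}]=\KL{p^{\pi,s'}_{0:N}}{p^{\nu,s'}_{0:N}}$, and an argument analogous to the one used in the proof of Theorem~\ref{thm:pathwise_kl_equivalence} shows this equals the action-level $\KL{\pi(\cdot|s')}{\nu(\cdot|s')}$. Assumption~\ref{ass} bounds the Radon--Nikodym derivative $\pi(a^0|s)/\nu(a^0|s)$, hence bounds this KL, which together with the bounded reward $R$ yields $\|\tilde R^\pi\|_\infty<\infty$.

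Next, I would show that $\mathcal{B}^\pi$ is a $\gamma$-contraction on the Banach space $(\mathcal{F}_b(\mathcal{S}\times\mathcal{A}),\|\cdot\|_\infty)$ of bounded functions. For any two bounded $Q_1,Q_2$, the penalty and reward terms cancel and a routine computation gives
\[
\big|\mathcal{B}^\pi Q_1(s,a)-\mathcal{B}^\pi Q_2(s,a)\big|
\;\le\;\gamma\,\mathbb{E}_{s',a'^0}\!\big[|Q_1(s',a'^0)-Q_2(s',a'^0)|\big]
\;\le\;\gamma\,\|Q_1-Q_2\|_\infty.
\]
Since $\gamma\in[0,1)$, Banach's fixed-point theorem yields a unique fixed point to which the iterates $Q^{k+1}=\mathcal{B}^\pi Q^k$ converge in sup-norm, starting from any bounded initialization $Q^0$. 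Identifying this fixed point with the soft Q-value of $\pi$ is immediate: by definition of $V^\pi$ in analogy with \eqref{eq:expected_q}, unrolling the pathwise-penalized return yields precisely the Bellman equation $Q^\pi=\mathcal{B}^\pi Q^\pi$.

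The main obstacle, and the only nontrivial step, is establishing the boundedness of $\tilde R^\pi$; without it the contraction argument operates on an unbounded function space and the fixed-point theorem does not directly apply. This is exactly why Assumption~\ref{ass} is imposed: it controls the action-level density ratio and, via the pathwise-to-action-level equivalence of the KL, forces the per-step accumulated penalty to remain finite uniformly in $(s,a)$. Once this is in hand, the rest of the proof is a direct transcription of the standard tabular soft policy evaluation argument.
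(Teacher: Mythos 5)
Your overall architecture --- absorb the accumulated diffusion penalties into an augmented reward $\tilde R^\pi$, observe that the penalty and reward terms cancel in the difference $\mathcal{B}^\pi Q_1-\mathcal{B}^\pi Q_2$ so that $\mathcal{B}^\pi$ is a $\gamma$-contraction in sup-norm, and identify the unique fixed point with $Q^\pi$ --- is exactly the paper's proof, and your contraction and fixed-point steps are, if anything, spelled out more carefully than the paper's one-line appeal to \citet{sac}.

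The one step that does not hold as you state it is the justification of $\|\tilde R^\pi\|_\infty<\infty$. You claim the pathwise KL $\KL{p^{\pi,s'}_{0:N}}{p^{\nu,s'}_{0:N}}$ equals the action-level $\KL{\pi(\cdot|s')}{\nu(\cdot|s')}$ by an argument analogous to Theorem~\ref{thm:pathwise_kl_equivalence}. That equality is not generic: by the chain rule for KL divergence,
\[
\KL{p^{\pi,s}_{0:N}}{p^{\nu,s}_{0:N}}=\KL{p^{\pi,s}_{0}}{p^{\nu,s}_{0}}+\mathbb{E}_{a^0\sim p^{\pi,s}_0}\!\left[\KL{p^{\pi,s}_{1:N|0}(\cdot|a^0)}{p^{\nu,s}_{1:N|0}(\cdot|a^0)}\right]\;\geq\;\KL{\pi(\cdot|s)}{\nu(\cdot|s)},
\]
with equality only when the conditionals given $a^0$ coincide $p^\pi$-almost surely --- which is what the paper establishes for the \emph{optimal} policy $p^{*,s}$ (the exact reverse of a forward process sharing $q^{\nu,s}$'s transition kernel), not for an arbitrary $p^\pi\in\Pi$. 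Assumption~\ref{ass} constrains only the $n=0$ marginal ratio, so a policy can satisfy it while its pathwise KL is arbitrarily large or infinite; your route therefore does not bound the penalty. A direct patch is to bound the pathwise penalty itself: each term is $\|\mu^{\pi,s}_n(a^n)-\mu^{\nu,s}_n(a^n)\|^2/(2\sigma_n^2)$ between two Gaussians with shared variance, so a uniform bound on the mean discrepancy (e.g., via a compact action space and continuous mean networks) suffices. In fairness, the paper's own proof is equally terse here --- it merely remarks that the bounded probability ratio ``ensures the boundedness of the $Q$-value function'' --- so your attempt faithfully reproduces the paper's argument including its soft spot; the difference is that you supplied an explicit justification that is false for general admissible policies.
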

\begin{proof}
    Define the KL-augmented reward as 
    $$
    \tilde{R}^\pi(s, a)\coloneqq R(s, a) - \eta\mathbb{E}_{a'^{0:N}\sim p^{\pi,s'}_{0:N}}\left[\sum_{n=1}^N\KL{p_{n-1|n}^{\pi,s'}}{p_{n-1|n}^{\nu,s'}}\right],
    $$
    and rewrite the policy evaluation update rule as
    $$
    Q^{k+1}(s, a)\leftarrow \tilde{R}^{\pi}(s, a)+\gamma \mathbb{E}_{a'^{0:N}\sim p^{\pi,s'}_{0:N}}\left[Q^k(s', a'^0)\right].
    $$
    When $\gamma\in(0, 1)$, this update satisfies the $\gamma$-contraction property, which is demonstrated in \citet{sac} to converge to a unique solution. Since $Q^\pi$ satisfies $Q^\pi=\mathcal{B}^\pi Q^\pi$, we obtain that the sequence $Q^k$ converges to $Q^\pi$. Note that we employ a bounded probability ratio to ensure the boundedness of the $Q$-value function rather than requiring the action space to be a finite set.
\end{proof}

\begin{proposition}[Soft Policy Improvement]\label{proposition:policy_improvement}
Let $p^{\pi_{\textrm{new}}}$ be the optimizer of the problem defined in \eqref{problem:step_diff}. Under Assumption~\ref{ass}, $V^{\pi_{\textrm{new}},s}_n(a^n)\geq V^{\pi_{\textrm{old}},s}_n(a^n)$ holds for all $n\in\{0, 1, \ldots, N\}$ and $(s, a)\in\mathcal{S}\times\mathcal{A}$.
\end{proposition}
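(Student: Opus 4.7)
The plan is to adapt the classic SAC-style soft policy improvement argument \citep{sac} to the two-time-scale structure of \algbb. I would first exploit the step-by-step optimality of $\pi_{\textrm{new}}$ to produce a one-step improvement of the diffusion Bellman operator $\mathcal{T}_n^{\pi}$ evaluated at the old diffusion value function; then telescope that improvement from $n=N$ down to $n=0$ along the new reverse trajectory to obtain an inequality at the environment-MDP level; and finally iterate the resulting environment-level inequality against the contraction of $\mathcal{B}^{\pi_{\textrm{new}}}$ (Lemma~\ref{lemma:soft_policy_eval}) to conclude $Q^{\pi_{\textrm{new}}}\geq Q^{\pi_{\textrm{old}}}$. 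Plugging this back into the telescoped identity for $V_n^{\pi_{\textrm{new}},s}$ then closes the loop and yields the claimed pointwise inequality for the diffusion value functions.

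\textbf{Step 1 (one-step improvement inside the reverse diffusion).} Since $p^{\pi_{\textrm{new}},s,a^n}_{n-1|n}$ is by construction the maximizer of \eqref{problem:step_diff} with $V^{\pi_{\textrm{old}},s}_{n-1}$ as the current value, evaluating the objective at the old transition and using $V_n^{\pi_{\textrm{old}},s}=\mathcal{T}_n^{\pi_{\textrm{old}}}V_n^{\pi_{\textrm{old}},s}$ gives, pointwise in $(s,n,a^n)$,
\[
V_n^{\pi_{\textrm{old}},s}(a^n)\;\leq\;-\eta\,\ell^{\pi_{\textrm{new}},s}_n(a^n)+\mathbb{E}_{p^{\pi_{\textrm{new}},s,a^n}_{n-1|n}}\!\left[V^{\pi_{\textrm{old}},s}_{n-1}(a^{n-1})\right].
\]
Unfolding this inequality recursively along the new reverse chain $a^{0:n-1}\sim p^{\pi_{\textrm{new}},s}_{0:n-1|n}(\cdot|a^n)$ and using $V_0^{\pi_{\textrm{old}},s}(a^0)=Q^{\pi_{\textrm{old}}}(s,a^0)$ produces the telescoped bound
\[
V_n^{\pi_{\textrm{old}},s}(a^n)\;\leq\;\mathbb{E}_{p^{\pi_{\textrm{new}},s,a^n}_{0:n-1|n}}\!\left[Q^{\pi_{\textrm{old}}}(s,a^0)-\eta\sum_{k=1}^n\ell^{\pi_{\textrm{new}},s}_k(a^k)\right].
\]

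\textbf{Step 2 (lift to the environment MDP and close the loop).} Specializing the telescoped bound to $n=N$ and taking an outer expectation over $a^N\sim\mathcal{N}(0,I)$, then substituting into the Bellman equation for $Q^{\pi_{\textrm{old}}}$, I obtain the clean environment-level inequality $Q^{\pi_{\textrm{old}}}\leq \mathcal{B}^{\pi_{\textrm{new}}}Q^{\pi_{\textrm{old}}}$. Because $\mathcal{B}^{\pi_{\textrm{new}}}$ is monotone and a $\gamma$-contraction with unique fixed point $Q^{\pi_{\textrm{new}}}$ (Lemma~\ref{lemma:soft_policy_eval}, with boundedness ensured by Assumption~\ref{ass}), iterating gives $Q^{\pi_{\textrm{old}}}\leq Q^{\pi_{\textrm{new}}}$. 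Performing the analogous telescoping for $V_n^{\pi_{\textrm{new}},s}$ (now an exact identity because both sides use $\pi_{\textrm{new}}$) and subtracting the Step-1 bound yields
\[
V_n^{\pi_{\textrm{new}},s}(a^n)-V_n^{\pi_{\textrm{old}},s}(a^n)\;\geq\;\mathbb{E}_{p^{\pi_{\textrm{new}},s,a^n}_{0:n-1|n}}\!\left[Q^{\pi_{\textrm{new}}}(s,a^0)-Q^{\pi_{\textrm{old}}}(s,a^0)\right]\;\geq\;0,
\]
which is the claim.

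\textbf{Main obstacle.} The technically delicate point is the interleaving of the two time scales: the step-level optimality only delivers improvement against a frozen terminal $Q^{\pi_{\textrm{old}}}$, and so by itself does not imply improvement of $Q$. The crux is therefore the passage from the diffusion-level telescoped inequality to the environment-level inequality $Q^{\pi_{\textrm{old}}}\leq\mathcal{B}^{\pi_{\textrm{new}}}Q^{\pi_{\textrm{old}}}$, after which monotone contraction propagates the gain across infinitely many environment transitions. A secondary concern is integrability: Assumption~\ref{ass}'s bounded density-ratio condition is precisely what keeps all the KL penalty terms and iterated expectations finite, so that both the recursive expansion over $n$ and the fixed-point argument for $\mathcal{B}^{\pi_{\textrm{new}}}$ are well-posed on the Gaussian-parameterized reverse transitions.
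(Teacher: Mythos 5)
Your proposal is correct and follows essentially the same route as the paper's proof: the one-step improvement from the optimality of $p^{\pi_{\textrm{new}}}$ in \eqref{problem:step_diff}, telescoping along the reverse diffusion chain, and then iterating at the environment level to conclude $Q^{\pi_{\textrm{old}}}\leq Q^{\pi_{\textrm{new}}}$ before feeding that back into the telescoped identity for $V_n$. The only difference is organizational — you factor the environment-level step through the explicit operator inequality $Q^{\pi_{\textrm{old}}}\leq\mathcal{B}^{\pi_{\textrm{new}}}Q^{\pi_{\textrm{old}}}$ and Lemma~\ref{lemma:soft_policy_eval}, whereas the paper writes out the same iterated expansion inline — which is, if anything, a slightly cleaner presentation of the identical argument.
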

\begin{proof}
    Since $p^{\pi_{\textrm{new}}, s, a^n}$ is the optimizer of the problem \ref{problem:step_diff}, it satisfies
    $$
        -\eta\ell^{\pi_{\textrm{old}},s}_n(a^n) + \underset{p^{\pi_{\textrm{old}},s,a^n}_{n-1|n}}{\mathbb{E}}\left[V^{\pi_{\textrm{old}},s}_{n-1}(a^{n-1})\right]=V^{\pi_{\textrm{ old}},s}_{n}(a^{n})\leq -\eta\ell^{\pi_{\textrm{new}},s}_n(a^n) + \underset{p^{\pi_{\textrm{new}},s,a^n}_{n-1|n}}{\mathbb{E}}\left[V^{\pi_{\textrm{old}},s}_{n-1}(a^{n-1})\right].
    $$
    For $n=0$, recursively applying the above inequality leads to
    $$
    \begin{aligned}
        &V_0^{\pi_{\textrm{old}}, s}(a^0) \\
        &= Q^{\pi_{\textrm{old}}}(s, a^0)\\
        &=R(s, a^0)+\gamma\mathbb{E}_{a'^{0:N}\sim p^{\pi_{\textrm{old}},s'}_{0:N}}\left[Q^{\pi_{\textrm{old}}}(s, a'^0)-\eta\sum_{i=1}^N\ell_i^{\pi_{\textrm{old}},s'}(a'^i)\right]\\
        &=R(s, a^0)+\gamma\mathbb{E}_{a'^{0:N}\sim p^{\pi_{\textrm{old}},s'}_{0:N}}\left[V^{\pi_{\textrm{old}},s'}_0(a'^0)-\eta\sum_{i=1}^N\ell_i^{\pi_{\textrm{old}},s'}(a'^i)\right]\\
        &=R(s,a^0)+\gamma\mathbb{E}_{a'^{N}\sim p^{\pi_{\textrm{old}},s'}_{N}}\left[V^{\pi_{\textrm{old}},s'}_N(a'^N)\right]\\
        &= R(s, a^0)+\gamma\mathbb{E}_{a'^N\sim p^{\pi_{\textrm{new}},s'}_N,a'^{N-1}\sim p^{\pi_{\textrm{old}},s'}_{N-1|N}}\left[V^{\pi_{\textrm{old}},s'}_{N-1}(a'^{N-1})-\eta\ell_N^{\pi_{\textrm{old}},s'}(a'^{N})\right]\\
        &\leq R(s, a^0)+\gamma\mathbb{E}_{a'^N\sim p^{\pi_{\textrm{new}},s'}_N,a'^{N-1}\sim p^{\pi_{\textrm{new}},s'}_{N-1|N}}\left[V^{\pi_{\textrm{old}},s'}_{N-1}(a'^{N-1})-\eta\ell_N^{\pi_{\textrm{new}},s'}(a'^{N})\right]\\
        &=R(s, a^0)+\gamma\mathbb{E}_{a'^{N-1:N}\sim p^{\pi_{\textrm{new}},s'}_{N-1:N},a'^{N-2}\sim p^{\pi_{\textrm{old}},s'}_{N-2|N-1}}\left[V^{\pi_{\textrm{old}},s'}_{N-2}(a'^{N-2})-\eta\ell_{N-1}^{\pi_{\textrm{old}},s'}(a'^{N-1})-\eta\ell_N^{\pi_{\textrm{new}},s'}(a'^{N})\right]\\
        &\leq R(s, a^0)+\gamma\mathbb{E}_{a'^{N-1:N}\sim p^{\pi_{\textrm{new}},s'}_{N-1:N},a'^{N-2}\sim p^{\pi_{\textrm{new}},s'}_{N-2|N-1}}\left[V^{\pi_{\textrm{old}},s'}_{N-2}(a'^{N-2})-\eta\ell_{N-1}^{\pi_{\textrm{new}},s'}(a'^{N-1})-\eta\ell_N^{\pi_{\textrm{new}},s'}(a'^{N})\right]\\
        &\ldots\\
        &\leq R(s, a^0)+\gamma\mathbb{E}_{a'^{0:N}\sim p^{\pi_{\textrm{new}},s'}_{0:N}}\left[V^{\pi_{\textrm{old}},s'}_{0}(a'^{0})-\eta\sum_{i=0}^N\ell_i^{\pi_{\textrm{new}},s'}(a'^{i})\right]\\
        &\ldots\\
        &\leq V_0^{\pi_{\textrm{new}}, s}(a^0),
    \end{aligned}
    $$
    where the fifth line follows from the definition of the diffusion value functions, the seventh line follows from the above inequality, and the last line follows from recursively expanding the values of successor states. For $n\in\{1, 2, \ldots, N\}$, the proof is similar:
    $$
    \begin{aligned}
        &V^{\pi_{\textrm{old}},s}_{n}(a^{n})\\
        &=-\eta\ell^{\pi_{\textrm{old}},s}_n(a^n) + \mathbb{E}_{p^{\pi_{\textrm{old}},s,a^n}_{n-1|n}}\left[V^{\pi_{\textrm{old}},s}_{n-1}(a^{n-1})\right]\\
        &\leq-\eta\ell^{\pi_{\textrm{new}},s}_n(a^n) + \mathbb{E}_{p^{\pi_{\textrm{new}},s,a^n}_{n-1|n}}\left[V^{\pi_{\textrm{old}},s}_{n-1}(a^{n-1})\right]\\
        &\leq-\eta\ell^{\pi_{\textrm{new}},s}_n(a^n) + \mathbb{E}_{p^{\pi_{\textrm{new}},s,a^n}_{n-1|n}}\left[-\eta\ell^{\pi_{\textrm{new}},s}_{n-1}(a^{n-1}) + \mathbb{E}_{p^{\pi_{\textrm{new}},s,a^{n-1}}_{n-2|n-1}}\left[V^{\pi_{\textrm{old}},s}_{n-2}(a^{n-2})\right]\right]\\
        &\ldots\\
        &\leq \mathbb{E}_{p^{\pi_{\textrm{new}},s,a^n}_{0:n-1|n}}\left[-\eta\sum_{i=1}^{n}\ell^{\pi_{\textrm{new}},s}_{i}(a^{i}) + V^{\pi_{\textrm{old}}, s}_0(a^0)\right]\\
        &\leq \mathbb{E}_{p^{\pi_{\textrm{new}},s,a^n}_{0:n-1|n}}\left[-\eta\sum_{i=1}^{n}\ell^{\pi_{\textrm{new}},s}_{i}(a^{i}) + V^{\pi_{\textrm{new}}, s}_0(a^0)\right]\\
        &=V_n^{\pi_{\textrm{new}}, s}(a^n)\\
    \end{aligned}
    $$
    where the last inequality is obtained previously. This completes the proof. Note that the $n=0$ case already guarantees the policy improvement in the environment MDP. 
\end{proof}

\begin{proposition}[Soft Policy Iteration~(adapted from \citet{sac})]\label{proposition:policy_iteration}
Under Assumption~\ref{ass}, repeated application of soft policy evaluation in \eqref{eq:upper_critic} and \eqref{eq:intermediate_value} and soft policy improvement in \eqref{problem:step_diff} from any $p^\pi\in\Pi$ converges to a policy $p^{\pi^*}$ such that $V_n^{\pi^*,s}(a)\geq V_n^{\pi,s}(a)$ for all $p^\pi\in\Pi$, $n\in\{0,1,\ldots,N\}$, and $(s,a)\in\mathcal{S}\times\mathcal{A}$.
\end{proposition}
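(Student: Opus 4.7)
The plan is to mirror the convergence argument for soft policy iteration in \citet{sac}, adapted to the two-time-scale setting of \algbb. Let $p^{\pi_0}\in\Pi$ be arbitrary, and let $\{p^{\pi_k}\}_{k\geq 0}$ denote the sequence of policies produced by alternating soft policy evaluation (\eqref{eq:upper_critic} and \eqref{eq:intermediate_value}) and soft policy improvement (\eqref{problem:step_diff}). I would proceed in three stages.

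\textbf{Stage 1 (Monotonicity).} First I would invoke Proposition~\ref{main_proposition:policy_improvement} directly at every iteration to obtain $V_n^{\pi_{k+1},s}(a^n)\geq V_n^{\pi_k,s}(a^n)$ for all $k$, all diffusion steps $n\in\{0,\ldots,N\}$, and all $(s,a^n)\in\mathcal{S}\times\mathcal{A}$. In particular, $Q^{\pi_k}(s,a)=V_0^{\pi_k,s}(a)$ is pointwise non-decreasing in $k$.

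\textbf{Stage 2 (Boundedness and convergence).} Next I would establish a uniform upper bound on the sequence. Assumption~\ref{ass} bounds the ratio $\pi(a^0|s)/\nu(a^0|s)$ for every $p^\pi\in\Pi$, which, together with the Gaussian form of the single-step reverse transitions, yields a uniform bound on each $\ell_n^{\pi,s}$. Combined with the boundedness of $R$, this gives a uniform bound on $|Q^{\pi_k}|$ and $|V_n^{\pi_k,s}|$ via the $\gamma$-contraction argument of Lemma~\ref{lemma:soft_policy_eval}. Monotone convergence then produces pointwise limits $V_n^{\infty,s}$ and $Q^\infty$, together with a limit policy $p^{\pi^*}$ that is, by construction, a maximizer of \eqref{problem:step_diff} against the limit values.

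\textbf{Stage 3 (Optimality of the limit).} Since $p^{\pi^*,s,a^n}_{n-1|n}$ is a maximizer of \eqref{problem:step_diff} with respect to $V^{\pi^*,s}_{n-1}$ at every $n$ and $a^n$, combining this with the recursion \eqref{eq:intermediate_value} gives that $V_n^{\pi^*,s}(a^n)$ is the soft optimum at each diffusion step. For an arbitrary competitor $p^\pi\in\Pi$, I would replay the telescoping inequality from the proof of Proposition~\ref{main_proposition:policy_improvement} with $\pi_{\mathrm{old}}=\pi$ and $\pi_{\mathrm{new}}=\pi^*$ to conclude $V_n^{\pi^*,s}(a^n)\geq V_n^{\pi,s}(a^n)$ for all $n,s,a^n$, which is the desired claim.

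\textbf{Expected main obstacle.} The main subtlety, as in \citet{sac}, is arguing that the pointwise limit of $V_n^{\pi_k,s}$ actually corresponds to a well-defined admissible policy $p^{\pi^*}\in\Pi$ satisfying both the evaluation and improvement fixed-point conditions simultaneously, rather than only a numerical limit of value iterates. Because improvement is performed jointly across every diffusion level $n$ and every intermediate action $a^n$, one must verify that the fixed-point equations at the different diffusion levels are mutually consistent. I expect this to be handled by the same telescoping used in Proposition~\ref{main_proposition:policy_improvement} together with the $\gamma$-contraction of $\mathcal{B}^{\pi^*}$ from Lemma~\ref{lemma:soft_policy_eval}, which guarantees uniqueness of the associated soft Q-fixed point and pins down $p^{\pi^*}$ as the optimizer over all of $\Pi$.
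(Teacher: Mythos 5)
Your proposal is correct and follows essentially the same route as the paper's own (very terse) proof: monotonicity via the soft policy improvement proposition, boundedness of the values from the bounded rewards and the bounded pathwise KL under Assumption~\ref{ass}, and optimality of the limit policy by the standard SAC-style argument that no further improvement is possible at convergence. Your version simply spells out the telescoping step and the fixed-point consistency in more detail than the paper does.
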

\begin{proof}

Let $p^{\pi_i}$ be the diffusion policy at iteration $i$. By Proposition~\ref{proposition:policy_improvement}, the sequence $V_n^{\pi_i, s}$ is monotonically increasing. Given that $V^{\pi,s}_n$ is finite due to the boundedness of rewards and the pathwise KL, the sequence converges to some $p^{\pi^*}$. Since the policy improvement has converged, it must be $V^{\pi^*,s}_n(a)\geq V^{\pi, s}_n(a)$ for any diffusion policy $p^\pi$, $(s, a)\in\mathcal{S}\times\mathcal{A}$ and $n\in\{0, 1, \ldots, N\}$. This completes the proof. 
\end{proof}
    

\section{Continuous-Time Perspective of the Pathwise KL}\label{appsec:continuous_time}
\citet{diffusion_sde} build the connection between diffusion models and reverse stochastic differential equations (SDEs) that gradually remove noise from data and transform a prior noise distribution into the target distribution. Based on the SDE understanding of diffusion models, we aim to investigate whether the introduced pathwise KL can be extended to continuous-time diffusion modeling.  

We define the forward diffusion SDE, which perturbs data into a prior distribution, as the following:
\begin{equation}\label{eq:forward_sde}
    \mathrm d x = f(x, t)\mathrm d t + g(t)\mathrm dw,
\end{equation}
where $w$ is the standard Wiener process, $f$ is the drift coefficient, and $g$ is the diffusion coefficient. The reverse process of this SDE is also a diffusion SDE:
\begin{equation}\label{eq:reverse_sde}
    \mathrm d x = \left[f(x,t)-g(t)^2 \nabla_x \log p_t(x)\right]\mathrm dt + g(t)\mathrm d \bar{w},
\end{equation}
where $\bar{w}$ is the reverse-time standard Wiener process and $\mathrm dt$ is a infinitesimal \textit{negative} time interval. In practice, we can use a parameterized neural network $s^\theta(x, t)$ to estimate the score of the marginal distribution using score matching:
\begin{equation}
    \begin{aligned}
        \theta^*=\argmin_{\theta}\mathbb{E}_{t, x(0),x(t)|x(0)}\left[\lambda(t)\|s_\theta(x(t), t)-\nabla_{x(t)}\log q_{t|0}(x(t)|x(0))\|^2\right],
    \end{aligned}
\end{equation}
where $q_{t|0}$ is the distribution of $x(t)$ given the initial sample $x(0)$. After the training is completed, we can substitute the score function in \eqref{eq:reverse_sde} and solve the reverse SDE to obtain samples that follow the target distribution. 

We consider two forward SDEs with the same drift and diffusion coefficient, but with different initial distributions $q_0^\pi$ and $q_0^\nu$. Let the approximated score networks be $s^\pi$ and $s^\nu$, and therefore we consider the following reverse SDEs:
\begin{enumerate}
\item $\mathrm d x = \left[f(x,t)-g(t)^2 s^\pi(x, t)\right]\mathrm dt + g(t)\mathrm d \bar{w}$ under measure $\mathbb{P}^\pi$, 
\item $\mathrm d x = \left[f(x,t)-g(t)^2 s^\nu(x, t)\right]\mathrm dt + g(t)\mathrm d \bar{w}$ under measure $\mathbb{P}^\nu$.
\end{enumerate}
Our goal is to compute the KL divergence between the distributions of the two reverse SDEs. Using Girsanov's theorem~\citep{oksendal2013stochastic}, we have
\begin{equation}\label{appeq:sde_kl}
    \begin{aligned}
        \KL{\mathbb{P^\pi}}{\mathbb{P}^\nu}&=\mathbb{E}_{\mathbb{P}^\pi}\left[\log \frac{\mathrm{d}\mathbb{P}^\pi}{\mathrm{d}\mathbb{P}^\nu}\right]\\
        &=\mathbb{E}_{\mathbb{P}^\pi}\left[\int_{0}^T\frac{[[f(x,t)-g(t)^2s^\pi(x, t)]-[f(x, t)-g(t)^2s^\nu(x, t)]]^2}{2g(t)^2}\mathrm{d}t\right]\quad\quad\text{(Girsanov's theorem)}\\
        &=\mathbb{E}_{\mathbb{P}^\pi}\left[\int_{0}^T\frac{g(t)^4\|s^\pi(x, t)-s^\nu(x, t)\|^2}{2g(t)^2}\mathrm{d}t\right]\\
        &=\mathbb{E}_{\mathbb{P}^\pi}\left[\int_{0}^T\frac{g(t)^2}2\|s^\pi(x, t)-s^\nu(x, t)\|^2\mathrm{d}t\right]. 
    \end{aligned}
\end{equation} 
Instantiating the SDE with Variance Preserving noise schedule~\citep{diffusion_sde}:
\begin{equation}
    \begin{aligned}
        f(x, t) &= \frac 12 \beta(t)x,\\
        g(t) &= \sqrt{\beta(t)},
    \end{aligned}
\end{equation}
we have
\begin{equation}
    \begin{aligned}
        \KL{\mathbb{P}^\pi}{\mathbb{P}^\nu}&=\mathbb{E}_{\mathbb{P}^\pi}\left[\int_0^T\frac{\beta(t)}{2}\|s^\pi(x, t)-s^\nu(x, t)\|^2\mathrm{d}t\right].
    \end{aligned}
\end{equation}

\eqref{appeq:sde_kl} presents the analytical form of the KL divergence between two reverse diffusion processes. In the following content, we continue to demonstrate that the pathwise KL introduced in this paper corresponds to \eqref{appeq:sde_kl} in the limit of $N\to\infty$. 

Examining the pathwise KL objective defined in~\eqref{eq:pathwise_kl}, 
\begin{equation}
    \begin{aligned}
        &\KL{p^{\pi}_{0:N}}{p^{\nu}_{0:N}}\\
        &=\mathbb{E}_{p^{\pi}_{0:N}}\left[\sum_{n=1}^{N}\KL{p^{\pi,x^n}_{n-1|n}}{p^{\nu,x^n}_{n-1|n}}\right]\\
        &=\mathbb{E}_{p^{\pi}_{0:N}}\left[\sum_{n=1}^N \frac{\|\mu^\pi(x^n, n)-\mu^\nu(x^n, n)\|^2}{2\sigma_n^2}\right]\\
        &=\mathbb{E}_{p^{\pi}_{0:N}}\left[\sum_{n=1}^N \frac{\beta_n^2}{2\sigma_n^2(1-\beta_n)}\|s_n^\pi(x^n)-s_n^\nu(x^n)\|^2\right]\\
        &=\mathbb{E}_{p^{\pi}_{0:N}}\left[\sum_{n=1}^N \frac{\beta_n(1-\bar{\alpha}_n)}{2(1-\bar{\alpha}_{n-1})(1-\beta_n)}\|s_n^\pi(x^n)-s_n^\nu(x^n)\|^2\right],
    \end{aligned}
\end{equation}
where $s_n$ is the score function at step $n$. 

Define $\Delta t=\frac TN$, by \citet{diffusion_sde}, we can show that in the limit of $N\to\infty$, we have $\beta_n=\beta(\frac {nT}N)\Delta t\to0$, $x^n=x(\frac {nT}N)$, and $s_n(x^n)=s(x(\frac {nT}N), \frac {nT}N)$. Therefore, 
\begin{equation}
    \begin{aligned}
        \lim_{N\to\infty}\KL{p^{\pi}_{0:N}}{p^{\nu}_{0:N}}&=\lim _{N\to\infty}\mathbb{E}_{p^{\pi}_{0:N}}\left[\sum_{n=1}^N \frac{\beta_n(1-\bar{\alpha}_n)}{2(1-\bar{\alpha}_{n-1})(1-\beta_n)}\|s_n^\pi(x^n)-s_n^\nu(x^n)\|^2\right]\\
        &=\lim _{N\to\infty}\mathbb{E}_{p^{\pi}_{0:N}}\left[\sum_{n=1}^N \frac{\beta_n}{2}\|s_n^\pi(x^n)-s_n^\nu(x^n)\|^2\right]\\
        &=\lim _{N\to\infty}\mathbb{E}_{p^{\pi}_{0:N}}\left[\sum_{n=1}^N \frac{\beta(\frac {nT}N)}{2}\|s^\pi(x(\frac {nT}N), \frac {nT}N)-s^\nu(x(\frac {nT}N), \frac {nT}N)\|^2\Delta t\right]\\
        &=\mathbb{E}_{p^\pi}\left[\int_0^T\frac{\beta(t)}{2}\|s^\pi(x(t), t)-s^\nu(x(t), t)\|^2\mathrm{d}t\right],
    \end{aligned}
\end{equation}
which exactly corresponds to the KL divergence of SDE in \eqref{appeq:sde_kl}. 

\section{Supplementary Experiment Results}\label{appsec:experiment}

\subsection{Discussions about Training Time}
A systematic breakdown and comparison of the runtime between \algbb, DAC, and Diffusion-QL is illustrated in Figure~\ref{appfig:runtime}. We evaluate \algbb, DAC, and Diffusion-QL with workstations equipped with NVIDIA RTX 4090 cards and the \textit{walker2d-medium-replay-v2} dataset. 

Our method consists of three distinct subroutines. The first one is pre-training the behavior diffusion with standard diffusion loss. Our observation is that this phase accounts for a minor fraction (about 8 minutes) of the overall runtime, so we ignore it in the following discussion. The second one is training value functions, including $Q^\pi$ and $V^\pi$. The approach to training $Q^\pi$ is identical to established methods such as Diffusion-QL~\citep{dql} and DAC~\citep{dac}, which is essentially sampling diffusion paths at the next state $s'$ and calculating the temporal difference target. In addition to $Q^\pi$, our method additionally trains $V^\pi$. However, this supplementary computation is not resource-intensive, as it only requires single-step diffusion to compute the training target $V^\pi$. Therefore, the additional cost of training diffusion value functions $V^\pi$ is a constant that does not scale with diffusion steps. The third subroutine is training the actor. Akin to EDP~\citep{edp}, \algbb only requires a single-step diffusion rollout, and thus the cost of training the actor is also a constant that does not scale with diffusion steps. In contrast, Diffusion-QL needs to differentiate with respect to the whole diffusion path, which causes a drastic increase in actor runtime. 

\begin{figure}
    \centering
    \includegraphics[width=0.5\linewidth]{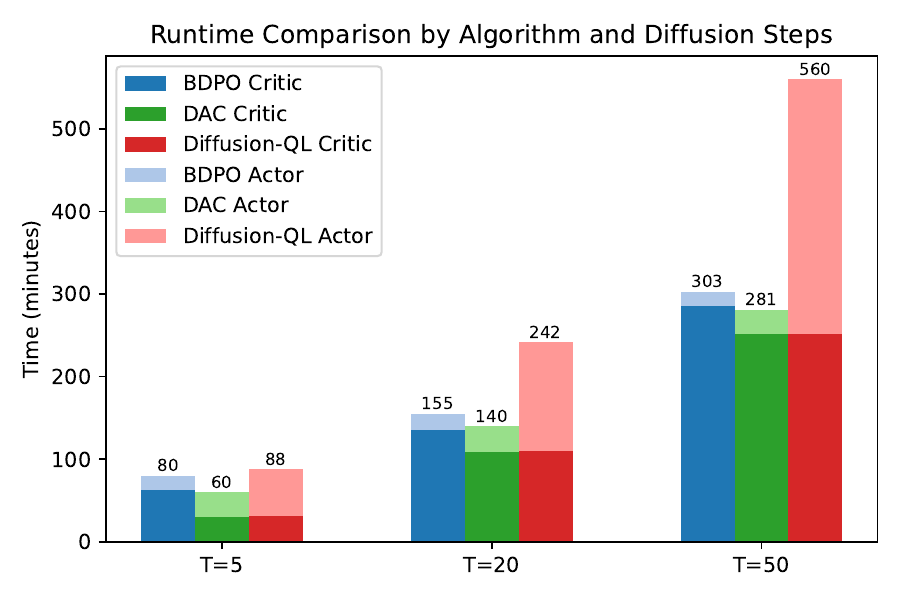}
    \caption{Algorithm Runtime of \algbb, DAC and Diffusion-QL. }
    \label{appfig:runtime}
\end{figure}

\subsection{Discussion about Inference Time}
Apart from training time, we also evaluate inference time since the latency is critical in real-world applications like robotic control. To benchmark the inference time accurately, we sampled 10,000 states from the \textit{walker2d-medium-replay-v2} dataset, executed the diffusion policies with a batch size of 1, and calculated the average inference time across 10,000 trials. Figure~\ref{appfig:inference_time} compares the inference speed of \algbb and baseline algorithms. Since the specific choice of deep learning backend may have a significant impact on computational latency, we explicitly indicate the backend used for each result. 

Among diffusion-based methods, Diffusion-QL generates actions by traversing the reverse diffusion process, and \algbb and DAC generate $N=10$ actions in parallel, followed by Q-value-based sample selection. Given that GPU acceleration effectively amortizes the cost of parallel generation and the calculation of $Q$ values only requires inference through a relatively small network, \algbb and DAC are about 10\% slower than JAX-based Diffusion-QL. In the meantime, implementations based on PyTorch are much slower than JAX-based implementations due to the dynamic graph execution. The computational overhead is most pronounced in QGPO, which incurs substantial inference latency due to its classifier-guided generation process.

\begin{figure}
    \centering
    \includegraphics[width=0.7\linewidth]{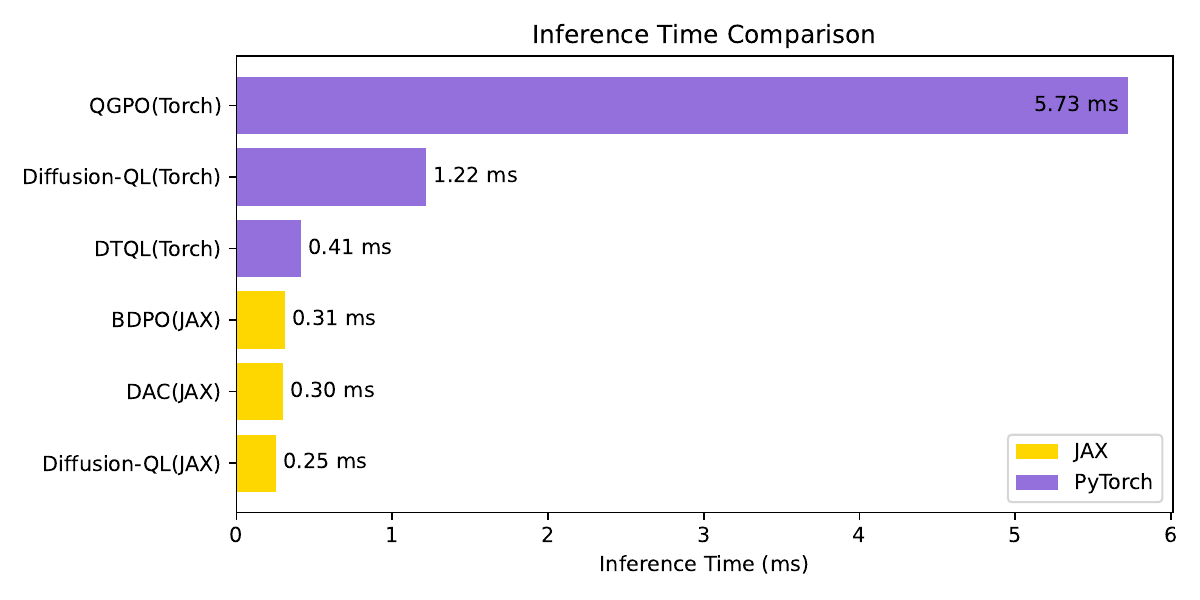}
    \caption{Inference Time Comparison.}
    \label{appfig:inference_time}
\end{figure}

\subsection{Generation Path On Synthetic 2D Tasks}
We plot the generation results of \algbb on all synthetic 2D datasets in Figure~\ref{appfig:2d_actor}. Throughout these datasets, we witness a close resemblance between the final sample distribution and the ground truth target distribution as depicted in Figure~\ref{appfig:2d_sample}. The sampling begins with gradual movements across the 2D plane and converges in later diffusion steps. This pattern is further demonstrated by the diffusion value functions, which offer weaker guidance during the initial steps but provide stronger guidance in the final steps. On the contrary, the generation results (Figure~\ref{appfig:dac_toy2d_actor}) of DAC, which leverages $\nabla_{a^n}Q^\pi(s, a^n)$ rather than the gradient of diffusion values $\nabla_{a^n}V^{\pi,s}_n(a^n)$ to guide the generation, fails to capture the target distribution. This is consistent with the findings from QGPO~\citep{qgpo}, which demonstrates that the guidance in \algbb is \textit{exact} energy guidance, while the guidance used by DAC is not. 

\begin{figure}[htbp]
    \centering
    \includegraphics[width=0.9\linewidth]{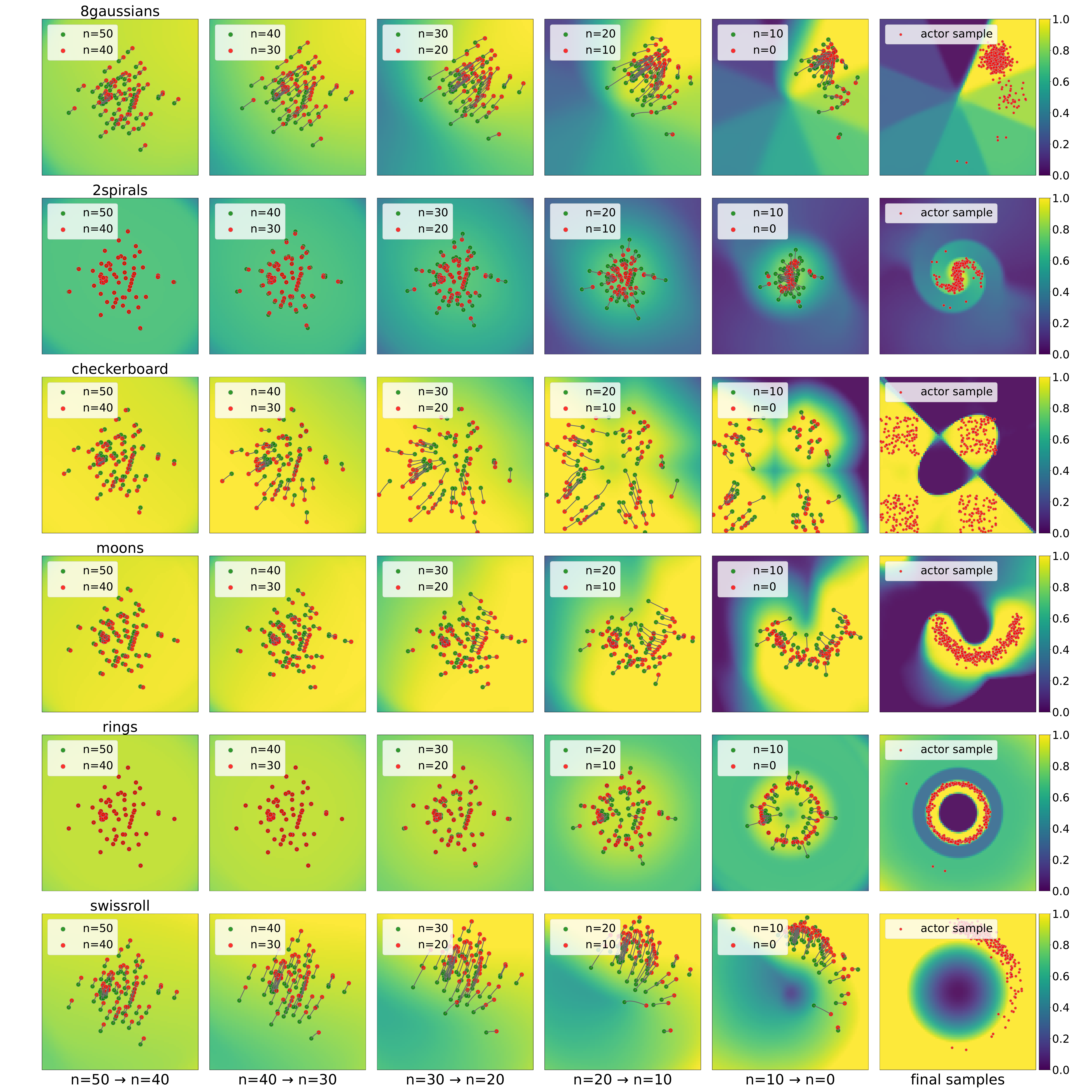}
    \caption{Illustration of the diffusion policy and the diffusion value function from \algbb on synthetic 2D datasets. The regularization strength is set to $\eta=0.06$. }
    \label{appfig:2d_actor}
\end{figure}

\begin{figure}[htbp]
    \centering
    \includegraphics[width=0.9\linewidth]{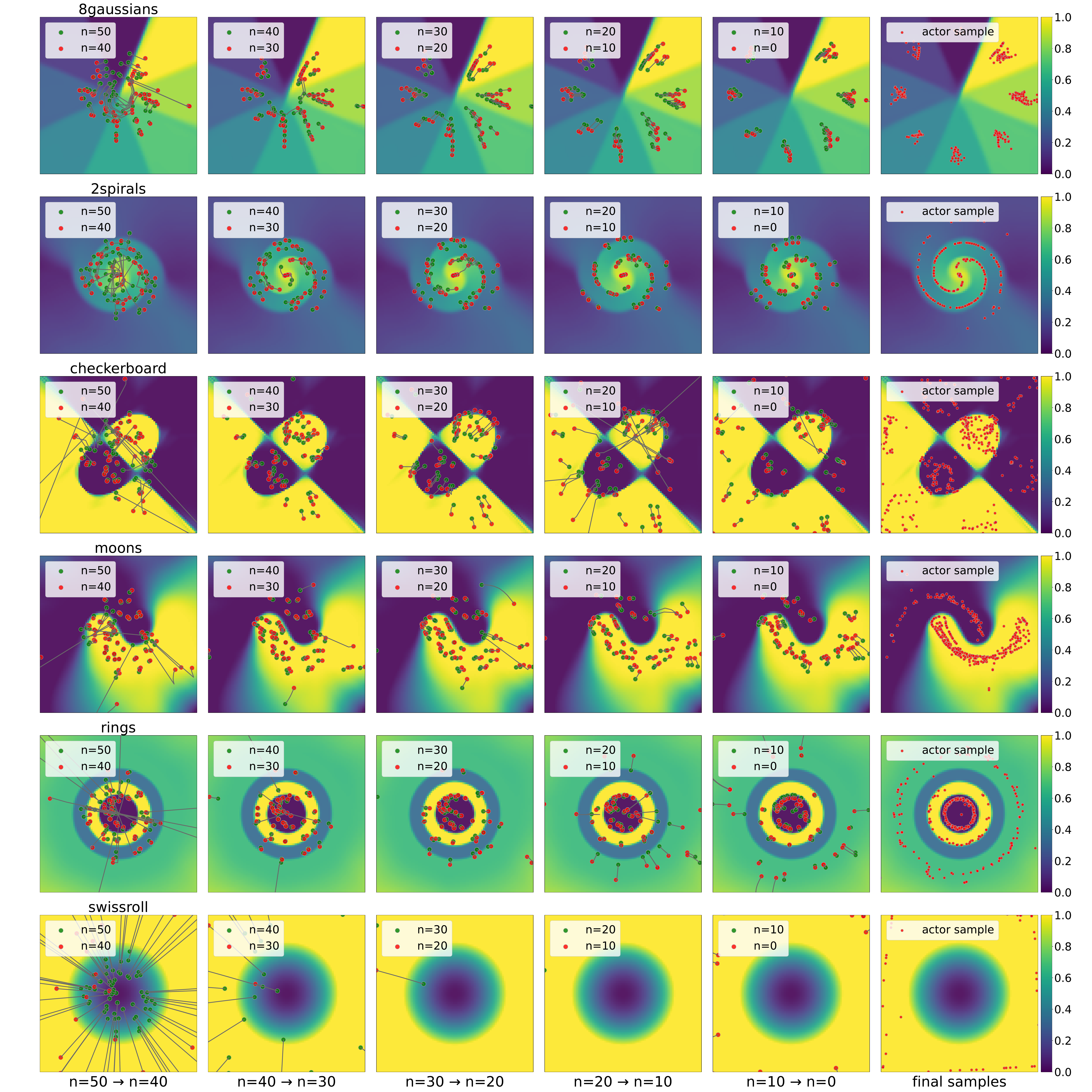}
    \caption{Illustration of the diffusion policy and the Q-value function from DAC on synthetic 2D datasets. The regularization strength is set to $\eta=0.06$. }
    \label{appfig:dac_toy2d_actor}
\end{figure}

\subsection{Training Curves of D4RL Datasets}\label{appsec:curves}
The curves of evaluation scores on D4RL datasets are presented in Figure~\ref{appfig:mujoco} and Figure~\ref{appfig:antmaze}. 

\begin{figure}[htbp]
    \centering
    \includegraphics[width=0.95\linewidth]{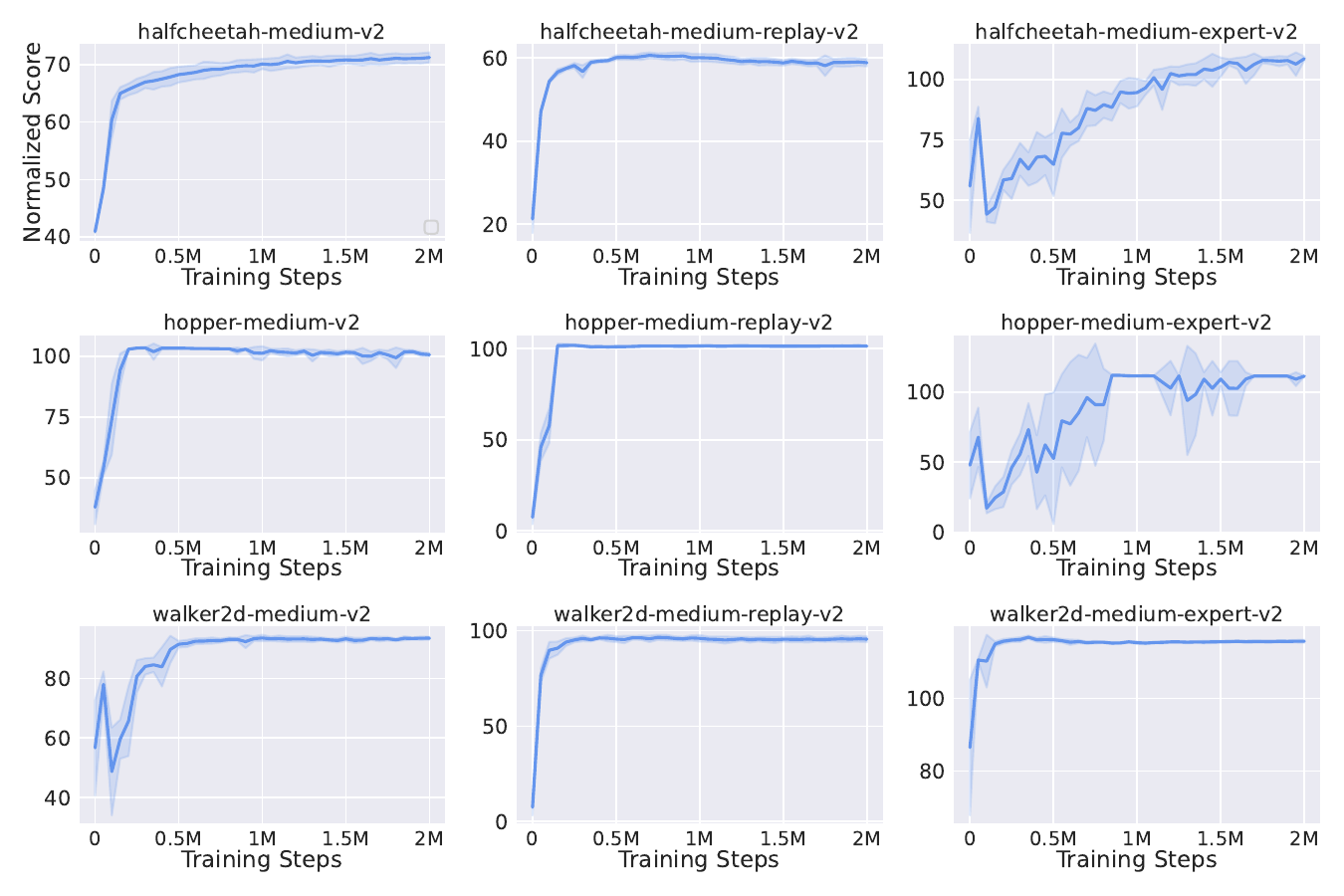}
    \caption{Evaluation scores of \algbb on D4RL Locomotion datasets. Results are aggregated using 5 independent seeds and 10 evaluation episodes for each seed. }
    \label{appfig:mujoco}
\end{figure}

\begin{figure}[htbp]
    \centering
    \includegraphics[width=0.95\linewidth]{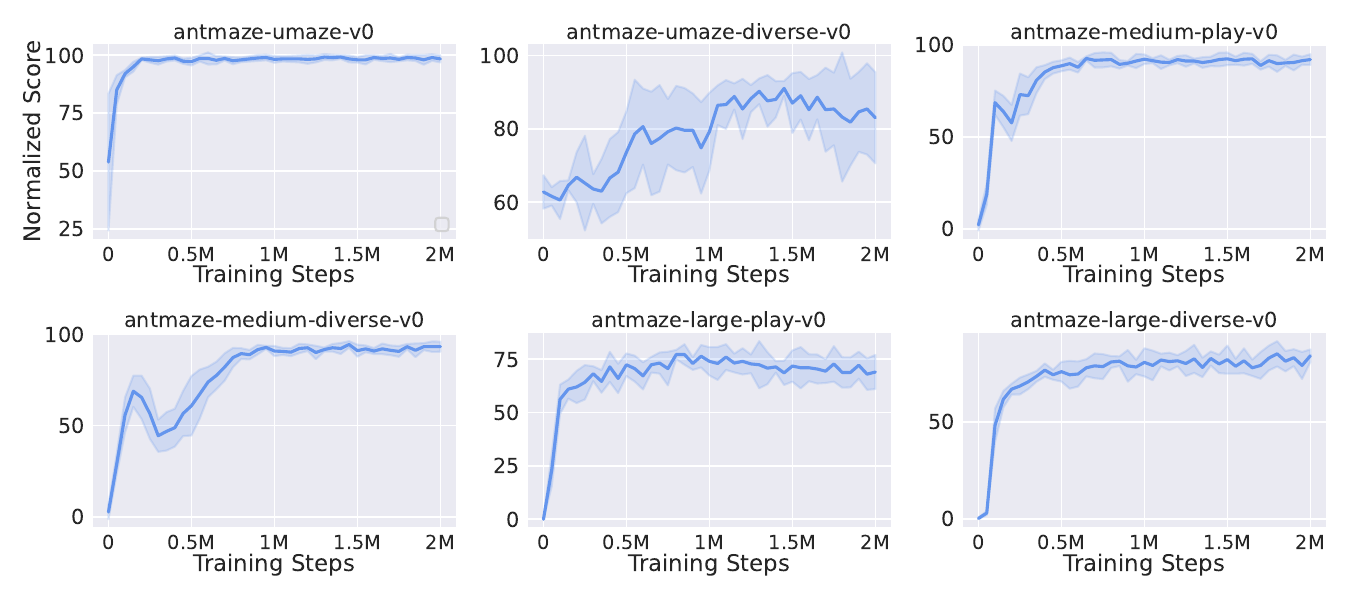}
    \caption{Evaluation scores of \algbb on D4RL Locomotion datasets. Results are aggregated using 5 independent seeds and 100 evaluation episodes for each seed. }
    \label{appfig:antmaze}
\end{figure}

\subsection{Ablation on Policy Parameterization}
Full results of different policy parameterizations are presented in Figure~\ref{appfig:abla_arch}, from which we found that the diffusion policy achieves the best overall performance. Note that we kept the hyperparameter tuning effort the same across different methods (see Section~\ref{appsec:abla_arch_detail}). 

\begin{figure}[htbp]
    \centering
    \includegraphics[width=0.8\linewidth]{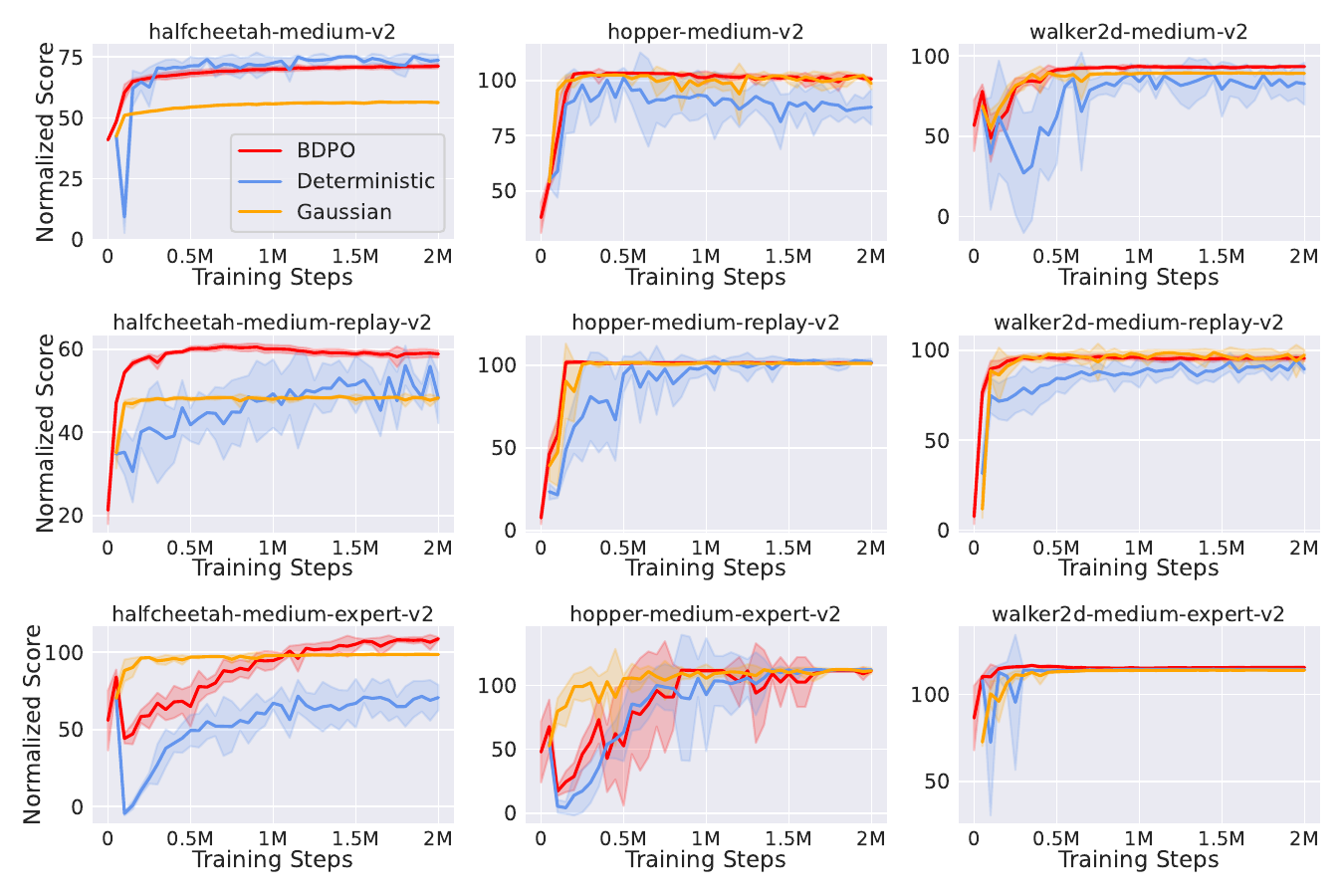}
    \caption{Ablation study with the policy parameterizations. Results are aggregated using 5 independent seeds and 10 evaluation episodes for each seed. }
    \label{appfig:abla_arch}
\end{figure}

\begin{figure}[htbp]
    \centering
    \includegraphics[width=0.6\linewidth]{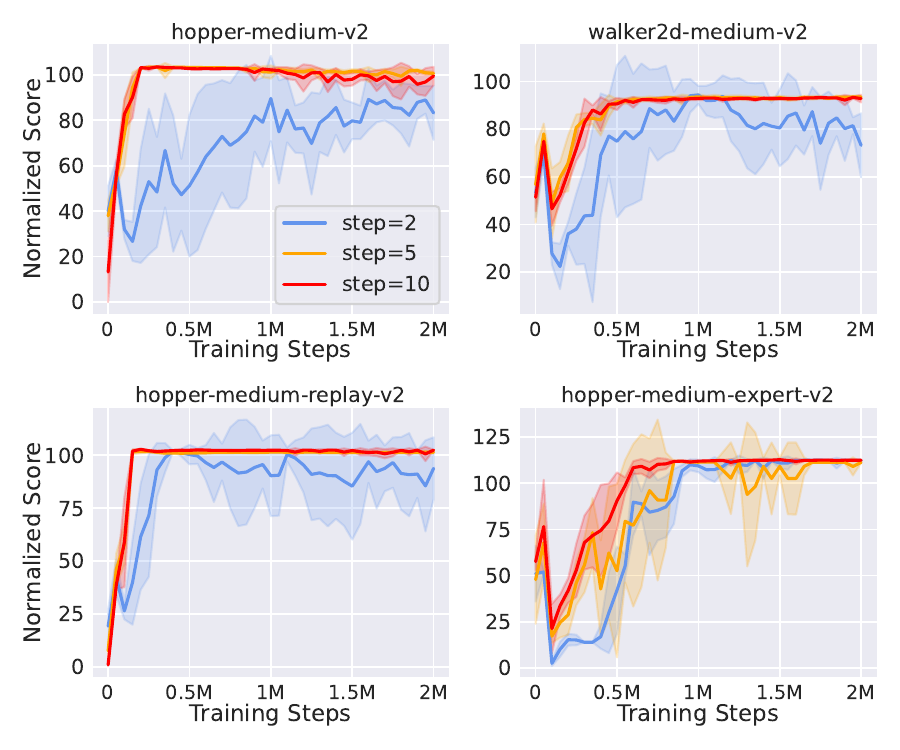}
    \caption{Ablation study with the number of diffusion steps $N$. Results are aggregated using 5 independent seeds and 10 evaluation episodes for each seed. }
    \label{appfig:abla_step}
\end{figure}

\subsection{Ablation on Diffusion Steps $N$}
The number of diffusion steps $N$ impacts both generation quality and divergence calculation. We empirically evaluate three configurations ($N=2$, $5$, and $10$) in Figure~\ref{appfig:abla_step}. Generally, the performance of \algbb degrades substantially when $N$ is too small; however, the improvement diminishes when $N$ exceeds a certain threshold. Based on these observations, we adopt $N=5$ as our default choice to trade off the efficiency and performance.

\end{document}